%%%%%%%% ICML 2021 EXAMPLE LATEX SUBMISSION FILE %%%%%%%%%%%%%%%%%

\documentclass{article}

% Recommended, but optional, packages for figures and better typesetting:
\usepackage{microtype}
\usepackage{graphicx}
\usepackage{subfigure}
\usepackage{booktabs} % for professional tables
\usepackage{amsmath}
\usepackage{amsfonts}
\usepackage{enumitem}
\usepackage{amsthm}
\usepackage{bbm}
\usepackage{wrapfig}
\usepackage{multirow}

\newtheorem{theorem}{Theorem}

\usepackage{todonotes}

\newcommand{\method}{POEx~}
\newcommand{\methodnoindent}{POEx}

% hyperref makes hyperlinks in the resulting PDF.
% If your build breaks (sometimes temporarily if a hyperlink spans a page)
% please comment out the following usepackage line and replace
% \usepackage{icml2021} with \usepackage[nohyperref]{icml2021} above.
\usepackage{hyperref}

% Attempt to make hyperref and algorithmic work together better:

% Use the following line for the initial blind version submitted for review:
%\usepackage{icml2021}

% If accepted, instead use the following line for the camera-ready submission:
\usepackage[accepted]{icml2021}

% The \icmltitle you define below is probably too long as a header.
% Therefore, a short form for the running title is supplied here:
\icmltitlerunning{Partially Observed Exchangeable Modeling}

\begin{document}

\twocolumn[
\icmltitle{Partially Observed Exchangeable Modeling}

% It is OKAY to include author information, even for blind
% submissions: the style file will automatically remove it for you
% unless you've provided the [accepted] option to the icml2021
% package.

% List of affiliations: The first argument should be a (short)
% identifier you will use later to specify author affiliations
% Academic affiliations should list Department, University, City, Region, Country
% Industry affiliations should list Company, City, Region, Country

% You can specify symbols, otherwise they are numbered in order.
% Ideally, you should not use this facility. Affiliations will be numbered
% in order of appearance and this is the preferred way.
\icmlsetsymbol{equal}{*}

\begin{icmlauthorlist}
\icmlauthor{Yang Li}{unc}
\icmlauthor{Junier B. Oliva}{unc}
\end{icmlauthorlist}

\icmlaffiliation{unc}{Department of Computer Science, University of North Carolina at Chapel Hill, NC, USA}

\icmlcorrespondingauthor{Yang Li}{yangli95@cs.unc.edu}

% You may provide any keywords that you
% find helpful for describing your paper; these are used to populate
% the "keywords" metadata in the PDF but will not be shown in the document
\icmlkeywords{Machine Learning, ICML}

\vskip 0.3in
]

% this must go after the closing bracket ] following \twocolumn[ ...

% This command actually creates the footnote in the first column
% listing the affiliations and the copyright notice.
% The command takes one argument, which is text to display at the start of the footnote.
% The \icmlEqualContribution command is standard text for equal contribution.
% Remove it (just {}) if you do not need this facility.

\printAffiliationsAndNotice{}  % leave blank if no need to mention equal contribution
% \printAffiliationsAndNotice{\icmlEqualContribution} % otherwise use the standard text.

\begin{abstract}
Modeling dependencies among features is fundamental for many machine learning tasks. Although there are often multiple related instances that may be leveraged to inform conditional dependencies, typical approaches only model conditional dependencies over individual instances. In this work, we propose a novel framework, partially observed exchangeable modeling (POEx) that takes in a set of related partially observed instances and infers the conditional distribution for the unobserved dimensions over multiple elements. Our approach jointly models the intra-instance (among features in a point) and inter-instance (among multiple points in a set) dependencies in data. POEx is a general framework that encompasses many existing tasks such as point cloud expansion and
few-shot generation, as well as new tasks like few-shot imputation. Despite its generality, extensive empirical evaluations show that our model achieves state-of-the-art performance across a range of applications.
\end{abstract}

\section{Introduction}
Modeling dependencies among features is at the core of many unsupervised learning tasks. Typical approaches consider modeling dependencies in a vacuum. For example, one typically imputes the unobserved features of a single instance based only on that instance's observed features. However, there are often multiple related instances that may be leveraged to inform conditional dependencies. For instance, a patient may have multiple visits to a clinic with different sets of measurements, which may be used together to infer the missing ones. In this work, we propose to jointly model the intra-instance (among features in a point) and inter-instance (among multiple points in a set) dependencies by modeling sets of partially observed instances. To our knowledge, this is the first work that generalizes the concept of partially observed data to exchangeable sets. 
We consider modeling an exchangeable (permutation invariant) likelihood over a set $\mathbf{x}=\{x_i\}_{i=1}^{N}, x_i \in \mathbb{R}^d$. However, unlike previous approaches   \cite{korshunova2018bruno, edwards2016towards, li2020exchangeable, bender2020exchangeable}, we model a partially observed set, where ``unobserved'' features of points $\mathbf{x}_u = \{x_i^{(u_i)}\}_{i=1}^{N}$ are conditioned on ``observed'' features of points $\mathbf{x}_o = \{x_i^{(o_i)}\}_{i=1}^{N}$ and $o_i, u_i \subseteq \{1,\ldots,d\}$ and $o_i \cap u_i = \emptyset$. Since each feature in $\mathbf{x}_u$ depends not only on features from the corresponding element but also on features from other set elements, the conditional likelihood $p(\mathbf{x}_u \mid \mathbf{x}_o)$ captures the dependencies across both features and set elements.

Probabilistic modeling of sets where each instance itself contains a collection of elements is challenging, since set elements are exchangeable and the cardinality may vary. Our partially observed setting brings another level of challenge due to the arbitrariness of the observed subset for each element. First, the subsets have arbitrary dimensionality, which poses challenges for modern neural network based models. Second, the combinatorial nature of the subsets renders the conditional distributions highly multi-modal, which makes it difficult to model accurately. 
To resolve these difficulties, we propose a variational weight-sharing scheme that is able to model the combinatorial cases in a single model.

Partially observed exchangeable modeling (\methodnoindent) is a general framework that encompasses many impactful applications, which we describe below. Despite its generality, we find that our single \method approach provides competitive or better results than specialized approaches for these tasks.

\textbf{Few-shot Imputation}
A direct application of the conditional distribution $p(\mathbf{x}_u \mid \mathbf{x}_o)$ enables a task we coin \emph{few-shot imputation}, where one models a subset of covariates based on multiple related observations of an instance $\{x_i^{(o_i)}\}_{i=1}^{N}$. Our set imputation formulation leverages the dependencies across set elements to infer the missing values. For example, when modeling an occluded region in an image $x_i^{(u_i)}$, it would be beneficial to also condition on observed pixels from other angles $x_j^{(o_j)}$. 
This task is akin to multi-task imputation and is related to group mean imputation \cite{sim2015missing}, which imputes missing features in an instance according to the mean value of the features in a related group. 
However, our approach models an entire distribution (rather than providing a single imputation) and captures richer dependencies beyond the mean of the features.
Given diverse sets during training, our \method model generalizes to unseen sets.

\textbf{Set Expansion}
When some set elements have fully observed features, $o_i = \{1, \ldots, d\}$, and others have fully unobserved features, $o_j = \emptyset$, \method can generate novel elements based on the given set of illustrations. Representative examples of this application include point cloud completion and upsampling, where new points are generated from the underlying geometry to either complete an occluded point cloud or improve the resolution of a sparse point cloud.

\textbf{Few-shot Generation}
The set expansion formulation can be viewed as a few-shot generative model, where novel instances are generated based on a few exemplars. Given diverse training examples, the model is expected to generate novel instances even on unseen sets.

\textbf{Set Compression}
Instead of expanding a set, we may proceed in the opposite direction and compress a given set. For example, we can represent a large point set with its coreset to reduce storage and computing requirements. The likelihood from our \method model can guide the selection of an optimal subset, which retains the most information.
% of the original set.

\textbf{Neural Processes}
If we introduce an index variable $t_i$ for each set element and extend the original set $\{x_i\}_{i=1}^{N}$ to a set of index-value pairs $\{(t_i, x_i)\}_{i=1}^{N}$, our \method model encapsulates the neural processes \cite{garnelo2018conditional,garnelo2018neural,kim2019attentive} as a special case. New elements corresponding to the given indexes can be generated from a conditional version of \methodnoindent: $p(\mathbf{x}_u \mid \mathbf{x}_o,\mathbf{t})$, where $\mathbf{t}=\{t_i\}_{i=1}^{N}$. In this work, we focus on modeling processes in high-dimensional spaces, such as processes of images, which are challenging due to the multi-modality of the underlying distributions.

\textbf{Set of Functions}
Instead of modeling a set of finite dimensional vectors, we may be interested in sets of functions, such as a set of correlated processes. By leveraging the dependencies across functions, we can fit each function better while utilizing fewer observations. Our formulation essentially generalizes the multi-task Gaussian processes \cite{bonilla2008multi} into multi-task neural processes.

The contributions of this work are as follows: 1) We extend the concept of partially observed data to exchangeable sets so that the dependencies among both features and set elements are captured in a single model. 2) We develop a deep latent variable based model to learn the conditional distributions for sets and propose a collapsed inference technique to optimize the ELBO. The collapsed inference simplifies the hierarchical inference framework to a single level. 3) We leverage the captured dependencies to perform various applications, which are difficult or even impossible for alternative approaches. 4) Our model handles neural processes as special cases and generalizes the original neural processes to high-dimensional distributions. 5) We propose a novel extension of neural process, dubbed multi-task neural process, where sets of infinite-dimensional functions are modeled together. 6) We conduct extensive experiments to verify the effectiveness of our proposed model and demonstrate state-of-the-art performance across a range of applications.

\section{Background}

\textbf{Set Modeling}
The main challenge of modeling set structured data is to respect the permutation invariant property of sets. A straight-forward approach is to augment the training data with randomly permuted orders and treat them as sequences. Given infinite training data and model capacity, an autoregressive model can produce permutation invariant likelihoods. However, for real-world limited data and models, permutation invariance is not guaranteed. As pointed out in \cite{vinyals2015order}, the order actually matters for autoregressive models.

BRUNO \cite{korshunova2018bruno} proposes using invertible transformations to project each set element to a latent space where dimensions are factorized independently. Then they build independent exchangeable processes for each dimension in the latent space to obtain the permutation invariant likelihoods. FlowScan \cite{bender2020exchangeable} instead recommends using a scan sorting operation to convert the set likelihood to a familiar sequence likelihood and normalizing the likelihood accordingly. ExNODE \cite{li2020exchangeable} utilizes neural ODE based permutation equivariant transformations and permutation invariant base likelihoods to construct a continuous normalizing flow model for exchangeable data.

De Finetti's theorem provides a principled way of modeling exchangeable data, where each element is modeled independently conditioned on a latent variable $\theta$:
\begin{equation}\label{eq:de-finetti}
\resizebox{0.7\linewidth}{!}{$
    p(\{x_i\}_{i=1}^{N}) = \int \prod_{i=1}^{N} p(x_i \mid \theta)p(\theta)d\theta.$}
\end{equation}
Latent Dirichlet allocation (LDA) \cite{blei2003latent} and its variants \cite{teh2006hierarchical,blei2007correlated} are classic models of this form, where the likelihood and prior are expressed as simple known distributions. Recently, deep neural network based models have been proposed \cite{yang2019pointflow,edwards2016towards,yang2020energy}, in which a VAE is trained to optimize a lower bound of \eqref{eq:de-finetti}. 
% The difference lies in the choice of prior and generator. \cite{edwards2016towards} uses hierarchical prior and neural network parametrized Gaussian as generator. \cite{yang2019pointflow} uses normalizing flow based prior and generator. \cite{yang2020energy} uses a energy based model as the generator.

\textbf{Arbitrary Conditional Models}
Instead of modeling the joint distribution $p(x)$, where $x \in \mathbb{R}^d$, arbitrary conditional models learn the conditional distributions for an arbitrary subset of features $x_u$ conditioned on another non-overlapping arbitrary subset $x_o$, where $u,o \subseteq \{1,\ldots,d\}$. Graphical models are a natural choice for such tasks \cite{koster2002marginalizing}, where conditioning usually has a closed-form solution. However, the graph structure is usually unknown for general data, and learning the graph structure from observational data has its own challenges \cite{heinze2018causal,scutari2019learns}. Sum-Product Network (SPN) \cite{poon2011sum} and its variants \cite{jaini2018deep,butz2019deep,tan2019hierarchical} borrow the idea from graphical models and build deep neural networks by stacking sum and product operations alternately so that the arbitrary conditionals are tractable.

Deep generative models have also been used for this task. Universal Marginalizer \cite{douglas2017universal} builds a feed-forward network to approximate the conditional marginal distributions of each dimension conditioned on $x_o$. VAEAC \cite{ivanov2018variational} utilizes a conditional VAE to learn the conditional distribution $p(x_u \mid x_o)$. 
% VAEAC outputs likelihoods that are over all dimensions, although, since they are conditionally independent given the latent variable, they only use the likelihoods corresponds to $x_u$. 
ACFlow \cite{li2019flow} uses a normalizing flow based model for learning the arbitrary conditionals, where invertible transformations are specially designed to deal with arbitrary dimensionalities. GAN based approaches \cite{belghazi2019learning} have also been proposed to model arbitrary conditionals. 

\textbf{Stochastic Processes}
Stochastic processes are usually defined as the marginal distribution over a collection of indexed random variables $\{x_t; t\in \mathcal{T}\}$. For example, Gaussian process \cite{rasmussen2003gaussian} specifies that the marginal distribution $p(x_{t_1:t_n} \mid \{t_i\}_{i=1}^{n})$ follows a multivariate Gaussian distribution, where the covariance is defined by some kernel function $K(t, t')$. The Kolmogrov extension theorem \cite{oksendal2003stochastic} provides the sufficient condition for designing a valid stochastic process:
\begin{itemize}[leftmargin=*,noitemsep,topsep=0pt]
\item \textbf{Exchangeability:} The marginal distribution is invariant to any permutation $\pi$, i.e.,
\begin{equation*}
    p(x_{t_1:t_n} \mid \{t_i\}_{i=1}^{n}) = p(x_{t_{\pi_1}:t_{\pi_n}} \mid \pi(\{t_i\}_{i=1}^{n})).
\end{equation*}
\item \textbf{Consistency:} Marginalizing out part of the variables is the same as the one obtained from the original process, i.e., for any $1 \leq m \leq n$ 
\begin{equation*}
\resizebox{\linewidth}{!}{$
    p(x_{t_1:t_m} \mid \{t_i\}_{i=1}^{m}) = \int p(x_{t_1:t_n} \mid \{t_i\}_{i=1}^{n}) dx_{t_{m+1}:t_n}.$}
\end{equation*}
\end{itemize}
Stochastic processes can be viewed as a distribution over the space of functions and can be used for modeling exchangeable data. However, classic Gaussian processes \cite{rasmussen2003gaussian} and Student-t processes \cite{shah2014student} assume the marginals follow a simple known distribution for tractability and have an $O(n^3)$ complexity, which render them impractical for large-scale complex dataset.

Neural Processes \cite{garnelo2018conditional,garnelo2018neural,kim2019attentive} overcome the above limitations by learning a latent variable based model conditioned on a set of context points $X^{(C)} = \{(t_i^{(C)}, x_i^{(C)})\}_{i=1}^{N_C}$. The latent variable $\theta$ implicitly parametrizes a distribution over the underlying functions so that values on target points $\{t_j^{(T)}\}_{j=1}^{N_T}$ can be evaluated over random draws of the latent variable, i.e.,
\begin{equation*}
\resizebox{\linewidth}{!}{
    $p(x_{t_1:t_{N_T}}^{(T)} \mid \{t_j^{(T)}\}_{j=1}^{N_T}) = \int \prod_{j=1}^{N_T} p(x_j^{(T)} \mid t_j^{(T)}, \theta) p(\theta \mid X^{(C)}) d\theta.$}
\end{equation*}
Neural processes generalize the kernel based stochastic processes with deep neural networks and scale with $O(n)$ due to the amortized inference. The exchangeability requirement is met by using exchangeable neural networks for inference, and the consistency requirement is roughly satisfied with the variational approximation.

\section{Method}
In this section, we develop our approach for modeling sets of partially observed elements. We describe the variants of \method and their corresponding applications. We also introduce our inference techniques used to train the model.

\subsection{Partially Observed Exchangeable Modeling}

\subsubsection{Arbitrary Conditionals}
Consider a set of vectors $\{x_i\}_{i=1}^{N}$, where $x_i \in \mathbb{R}^d$ and $N$ is the cardinality of the set. For each set element $x_i$, only a subset of features $x_i^{(o_i)}$ are observed and we would like to predict the values for another subset of features $x_i^{(u_i)}$. Here, $u_i,o_i \subseteq \{1, \ldots,d\}$ and $u_i \cap o_i = \emptyset$. We denote the set of observed features as $\mathbf{x}_o = \{x_i^{(o_i)}\}_{i=1}^{N}$ and the set of unobserved features as $\mathbf{x}_u = \{x_i^{(u_i)}\}_{i=1}^{N}$. Our goal is to model the distribution $p(\mathbf{x}_u \mid \mathbf{x}_o)$ for arbitrary $u_i$ and $o_i$.

In order to model the arbitrary conditional distributions for sets, we introduce a latent variable $\theta$. The following theorem states that there exists a latent variable $\theta$ such that conditioning on $\theta$ renders the set elements of $\mathbf{x}_u$ i.i.d.. Please see appendix for the proof.
\begin{theorem}\label{thm:ac_thm}
Given a set of observations $\mathbf{x}=\{x_i\}_{i=1}^{N}$ from an infinitely exchangeable process, denote the observed and unobserved part as $\mathbf{x}_o = \{x_i^{(o_i)}\}_{i=1}^{N}$ and $\mathbf{x}_u = \{x_i^{(u_i)}\}_{i=1}^{N}$ respectively. Then the arbitrary conditional distribution $p(\mathbf{x}_u \mid \mathbf{x}_o)$ can be decomposed as follows:
\begin{equation}\label{eq:ac_set}
\resizebox{0.75\linewidth}{!}{$\displaystyle{
    p(\mathbf{x}_u \mid \mathbf{x}_o) = \int \prod_{i=1}^{N} p(x_i^{(u_i)} \mid x_i^{(o_i)}, \theta) p(\theta \mid \mathbf{x}_o) d\theta.}$}
\end{equation}
\end{theorem}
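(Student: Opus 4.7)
The plan is to reduce the statement to a direct application of de Finetti's representation theorem, combined with Bayes' rule and the product structure of conditional independence. The claim is essentially that arbitrary conditioning on marginal sub-features of exchangeable elements preserves the de Finetti factorization, with the prior on $\theta$ replaced by its posterior given the observed features.

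First I would invoke de Finetti's theorem on the infinitely exchangeable process $\{x_i\}$: there exists a latent $\theta$ and measure $p(\theta)$ such that
\begin{equation*}
p(\mathbf{x}) = \int \prod_{i=1}^{N} p(x_i \mid \theta)\, p(\theta)\, d\theta,
\end{equation*}
so the full vectors $x_1,\ldots,x_N$ are conditionally i.i.d.\ given $\theta$. Next I would use the fact that conditional independence between whole vectors implies conditional independence between any measurable functions of them, in particular the sub-vectors $x_i^{(o_i)}$ and $x_i^{(u_i)}$ obtained by selecting coordinate sets $o_i,u_i$. Since $o_i \cap u_i = \emptyset$, for each $i$ the single-element density factors as $p(x_i^{(o_i)}, x_i^{(u_i)} \mid \theta) = p(x_i^{(u_i)} \mid x_i^{(o_i)}, \theta)\, p(x_i^{(o_i)} \mid \theta)$.

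Then I would substitute this factorization into the joint and compute:
\begin{equation*}
p(\mathbf{x}_o, \mathbf{x}_u) = \int \prod_{i=1}^{N} p(x_i^{(u_i)} \mid x_i^{(o_i)}, \theta)\, p(x_i^{(o_i)} \mid \theta)\, p(\theta)\, d\theta,
\end{equation*}
and marginalizing $\mathbf{x}_u$ gives $p(\mathbf{x}_o) = \int \prod_i p(x_i^{(o_i)} \mid \theta)\, p(\theta)\, d\theta$. Dividing the two expressions and recognizing that by Bayes' rule $p(\theta \mid \mathbf{x}_o) = \prod_i p(x_i^{(o_i)} \mid \theta)\, p(\theta) / p(\mathbf{x}_o)$ yields
\begin{equation*}
p(\mathbf{x}_u \mid \mathbf{x}_o) = \int \prod_{i=1}^{N} p(x_i^{(u_i)} \mid x_i^{(o_i)}, \theta)\, p(\theta \mid \mathbf{x}_o)\, d\theta,
\end{equation*}
which is the claimed decomposition.

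The only nontrivial step is the initial appeal to de Finetti: strictly speaking one should verify that ``infinitely exchangeable process'' in the theorem's hypothesis matches the standard hypothesis of the representation theorem (an infinite exchangeable sequence in some Polish space), and that the per-coordinate projections inherit measurability. Beyond that, everything reduces to manipulating the resulting conditional densities, and the product form across $i$ in the integrand is inherited directly from the conditional i.i.d.\ structure of the $x_i$ given $\theta$. I do not anticipate the factorization or the Bayes-rule step to introduce any obstacle; the main care is simply to treat $\theta$ as a sufficient statistic for the exchangeable process and then note that the arbitrary masking $o_i, u_i$ acts element-wise and therefore does not couple different $i$'s beyond what $\theta$ already controls.
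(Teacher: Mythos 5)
Your proposal is correct and takes essentially the same route as the paper's proof: de Finetti's representation, the per-element chain rule $p(x_i^{(u_i)}, x_i^{(o_i)} \mid \theta) = p(x_i^{(u_i)} \mid x_i^{(o_i)}, \theta)\, p(x_i^{(o_i)} \mid \theta)$, and Bayes' rule to rewrite $p(\mathbf{x}_o \mid \theta)\, p(\theta)$ as $p(\theta \mid \mathbf{x}_o)\, p(\mathbf{x}_o)$ before dividing by $p(\mathbf{x}_o)$. If anything, you are slightly more careful than the paper: you derive the conditional independence of the masked subvectors explicitly from the full-vector representation, whereas the paper applies de Finetti directly to the per-element-masked set $\mathbf{x}_m$ (which is not itself exchangeable when the masks $m_i$ differ) and justifies the key step $\prod_{i} p(x_i^{(o_i)} \mid \theta) = p(\mathbf{x}_o \mid \theta)$ only with an informal remark.
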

Optimizing \eqref{eq:ac_set}, however, is intractable due to the high-dimensional integration over $\theta$. Therefore, we resort to variational approximation and optimize a lower bound:
\begin{equation}\label{eq:ac_set_elbo}
\resizebox{0.8\linewidth}{!}{$
\begin{aligned}
    \log p(\mathbf{x}_u \mid \mathbf{x}_o) \geq& \sum_{i=1}^{N} \mathbb{E}_{q(\theta \mid \mathbf{x}_u, \mathbf{x}_o)} \log p(x_i^{(u_i)} \mid x_i^{(o_i)}, \theta) \\&- D_{KL}(q(\theta \mid \mathbf{x}_u, \mathbf{x}_o)\ \| \  p(\theta \mid \mathbf{x}_o)),
\end{aligned}$}
\end{equation}
where $q(\theta \mid \mathbf{x}_u, \mathbf{x}_o)$ and $p(\theta \mid \mathbf{x}_o)$ are variational posterior and prior that are permutation invariant w.r.t. the conditioning set. The arbitrary conditional likelihoods $p(x_i^{(u_i)} \mid x_i^{(o_i)}, \theta)$ are over a $\mathbb{R}^d$ feature space and can be implemented as in previous works \cite{ivanov2018variational,li2019flow,belghazi2019learning}. 

Note that $\mathbf{x}_o$ and $\mathbf{x}_u$ are sets of vectors with arbitrary dimensionality. To represent vectors with arbitrary dimensionality so that a neural network can handle them easily, we impute missing features with zeros and introduce a binary mask to indicate whether the corresponding dimensions are missing or not. We denote the zero imputation operation as $\mathcal{I}(\cdot)$ that takes in a set of features with arbitrary dimensionality and outputs a set of $d$-dimensional features and the corresponding set of binary masks.

\subsubsection{Set Compression}
Given a pretrained \method model, we can use the arbitrary conditional likelihoods $p(\mathbf{x}_u \mid \mathbf{x}_o)$ to guide the selection of a subset for compression. The principle is to select a subset that preserves the most information. Set compression is a combinatorial optimization problem, which is NP-hard. Here, we propose a sequential approach that selects one element at a time. We start from $o_i=\emptyset, u_i=\{1,\ldots,d\}$ for each element. The next element $i$ to select should be the one that maximizes the conditional mutual information $I(x_i, \mathbf{x}_u \mid \mathbf{x}_o)$ so that it provides the most information for the remaining elements. From the definition of conditional mutual information, we have
\begin{equation*}
\resizebox{\linewidth}{!}{$
    I(x_i, \mathbf{x}_u \mid \mathbf{x}_o) = H(x_i \mid \mathbf{x}_o) - \mathbb{E}_{p(\mathbf{x}_u \mid \mathbf{x}_o)} H(x_i \mid \mathbf{x}) = H(x_i \mid \mathbf{x}_o).$}
\end{equation*}
Since the original set $\mathbf{x}$ is given, we can estimate the entropy with
\begin{equation*}
\resizebox{\linewidth}{!}{$
    H(x_i \mid \mathbf{x}_o)= \mathbb{E}_{p(x_i \mid \mathbf{x}_o)} -\log p(x_i \mid \mathbf{x}_o) \approx -\log p(x_i \mid \mathbf{x}_o).$}
\end{equation*}
Therefore, the next element to chose is simply the one with minimum likelihood $p(x_i \mid \mathbf{x}_o)$ based on the current selection $\mathbf{x}_o$. Afterwards, we update $o_i=\{1,\ldots,d\}, u_i=\emptyset$ and proceed to the next selection step.

\subsubsection{Neural Process}
Some applications may introduce index variables for each set element. For example, a collection of frames from a video are naturally indexed by their timestamps. Here, we consider a set of index-value pairs $\{(t_i, x_i)\}_{i=1}^{N}$, where $t_i$ can be either discrete or continuous. Similarly, $x_i$ are partially observed, and we define $\mathbf{x}_u$ and $\mathbf{x}_o$ accordingly. We also define $\mathbf{t} = \{t_i\}_{i=1}^{N}$ for notation simplicity, which are typically given. By conditioning on the index variables $\mathbf{t}$, we modify the lower bound \eqref{eq:ac_set_elbo} to
\begin{equation}\label{eq:np_elbo}
\resizebox{0.9\linewidth}{!}{$
\begin{aligned}
    \log p(\mathbf{x}_u \mid \mathbf{x}_o, \mathbf{t}) \geq& \sum_{i=1}^{N} \mathbb{E}_{q(\theta \mid \mathbf{x}_u, \mathbf{x}_o, \mathbf{t})} \log p(x_i^{(u_i)} \mid x_i^{(o_i)}, t_i, \theta) \\&- D_{KL}(q(\theta \mid \mathbf{x}_u, \mathbf{x}_o, \mathbf{t}) \| p(\theta \mid \mathbf{x}_o, \mathbf{t})).
\end{aligned}$}
\end{equation}

If we further generalize the cardinality $N$ to be infinite and specify a context set $\mathbf{x}_c$ and a target set $\mathbf{x}_t$ to be arbitrary subsets of all set elements, i.e., $\mathbf{x}_c, \mathbf{x}_t \subseteq \{(t_i, x_i)\}_{i=1}^{N}$, we recover the exact setting for neural process. This is a special case of our \method model in that features are fully observed ($o_i=\{1,\ldots,d\},u_i=\emptyset$) for elements of $\mathbf{x}_c$ and fully unobserved ($o_i=\emptyset,u_i=\{1,\ldots,d\}$) for elements of $\mathbf{x}_t$. That is, $\mathbf{x}_o = \mathbf{x}_c$ and $\mathbf{x}_u = \mathbf{x}_t$. The ELBO objective is exactly the same as \eqref{eq:np_elbo}. Similar to neural processes, we use a finite set of data points to optimize the ELBO \eqref{eq:np_elbo} and sample a subset at random as the context.

Neural processes usually use simple feed-forward networks and Gaussian distributions for the conditional likelihood $p(x_i^{(u_i)} \mid x_i^{(o_i)}, t_i, \theta)$, which makes it unsuitable for multi-modal distributions. Furthermore, they typically deal with low-dimensional data. Our model, however, utilizes arbitrary conditional likelihoods, which can deal with high-dimensional and multi-modal distributions.

\subsubsection{Multi-task Neural Process}
Neural processes model the distributions over functions, where one input variable is mapped to one target variable. In a multi-task learning scenario, there exists multiple target variables. Therefore, we propose a multi-task neural process extension to capture the correlations among target variables. For notation simplicity, we assume the target variables are exchangeable here. Non-exchangeable targets can be easily transformed to exchangeable ones by concatenating with their indexes. Consider a set of functions $\{\mathcal{F}_k\}_{k=1}^{K}$ for $K$ target variables. Inspired by neural process, we represent each function $\mathcal{F}_k$ by a set of input-output pairs $\{(t_{ki},x_{ki})\}_{i=1}^{N_k}$. The goal of multi-task neural process is to learn an arbitrary conditional model given arbitrarily observed subsets from each function. We similarly define $\mathbf{x}_u = \{\mathcal{F}_k^{(u_k)}\}_{k=1}^{N_k} = \{\{(t_{ki}, x_{ki}^{(u_{ki})})\}_{i=1}^{N_k}\}_{k=1}^{K}$ and $\mathbf{x}_o = \{\mathcal{F}_k^{(o_k)}\}_{k=1}^{N_k} = \{\{(t_{ki}, x_{ki}^{(o_{ki})})\}_{i=1}^{N_k}\}_{k=1}^{K}$.

The multi-task neural process described above models a set of sets. A straight-forward approach is to use a hierarchical model
\begin{equation}\label{eq:mtnp_hier}
\resizebox{0.9\linewidth}{!}{$
\begin{aligned}
    &p(\mathbf{x}_u \mid \mathbf{x}_o) = \int \prod_{k=1}^{K} p(\mathcal{F}_k^{(u_k)} \mid \mathcal{F}_k^{(o_k)}, \theta) p(\theta \mid \mathbf{x}_o) d\theta = \\ &\int \prod_{k=1}^{K} \left[ \int \prod_{i=1}^{N_k} p(x_{ki}^{(u_{ki})} \mid x_{ki}^{(o_{ki})}, t_{ki}, \phi) p(\phi \mid \mathcal{F}_k^{(o_k)}, \theta) d\phi \right] p(\theta \mid \mathbf{x}_o) d\theta,
\end{aligned}$}
\end{equation}
which utilizes the Theorem \ref{thm:ac_thm} twice. However, inference with such a model is challenging since complex inter-dependencies need to be captured across two set levels. Moreover, the latent variables are not of direct interest. Therefore, we propose an inference technique that collapses the two latent variables into one. Specifically, we assume the uncertainties across $\theta$ and $\phi$ are both absorbed into $\theta$ and define $p(\phi \mid \mathcal{F}_k^{(o_k)}, \theta) = \delta(G(\mathcal{F}_k^{(o_k)}, \theta))$, where $G$ represents a deterministic mapping. Therefore, \eqref{eq:mtnp_hier} can be simplified as 
\begin{equation}
\resizebox{0.9\linewidth}{!}{$
\begin{aligned}
    &p(\mathbf{x}_u \mid \mathbf{x}_o) =\\ &\int \prod_{k=1}^{K} \prod_{i=1}^{N_k} p(x_{ki}^{(u_{ki})} \mid x_{ki}^{(o_{ki})}, t_{ki}, \phi) \delta(G(\mathcal{F}_k^{(o_k)}, \theta)) p(\theta \mid \mathbf{x}_o) d\phi d\theta.
\end{aligned}$}
\end{equation}
Further collapsing $\phi$ and $\theta$ into one latent variable $\psi$ gives
\begin{equation}\label{eq:mtnp_collapsed}
\resizebox{0.9\linewidth}{!}{$\displaystyle{
    p(\mathbf{x}_u \mid \mathbf{x}_o) = \int \prod_{k=1}^{K} \prod_{i=1}^{N_k} p(x_{ki}^{(u_{ki})} \mid x_{ki}^{(o_{ki})}, t_{ki}, \psi) p(\psi \mid \mathbf{x}_o) d\psi}$,}
\end{equation}
where $\psi$ is permutation invariant for both set levels. The collapsed model may seem restricted at first sight, but we show empirically that it remains powerful when we use a flexible likelihood model for the arbitrary conditionals. More importantly, it significantly simplifies the implementation.

A similar collapsed inference technique has been used in \cite{griffiths2004finding,teh2006collapsed,porteous2008fast} to reduce computational cost and accelerate inference for LDA models. Recently, \citet{yang2020energy} propose to use collapsed inference in the neural process framework to marginalize out the index variables. Here, we utilize collapsed inference to reduce a hierarchical generative model to a single level.

Given the generative process \eqref{eq:mtnp_collapsed}, it is straightforward to optimize using the ELBO
\begin{equation}\label{eq:mtnp_elbo}
\resizebox{0.9\linewidth}{!}{$
\begin{aligned}
    \log p(\mathbf{x}_u \mid \mathbf{x}_o) \geq &\sum_{k=1}^{K} \sum_{i=1}^{N_k} \mathbb{E}_{q(\psi \mid \mathbf{x}_u,\mathbf{x}_o,\mathbf{t})} \log p(x_{ki}^{(u_{ki})} \mid x_{ki}^{(o_{ki})}, t_{ki}, \psi) \\
    &- D_{KL}(q(\psi \mid \mathbf{x}_u,\mathbf{x}_o,\mathbf{t}) \| p(\psi \mid \mathbf{x}_o,\mathbf{t})).
\end{aligned}$}
\vspace{-4pt}
\end{equation}

\begin{figure*}
    \centering
    \includegraphics[width=0.7\linewidth]{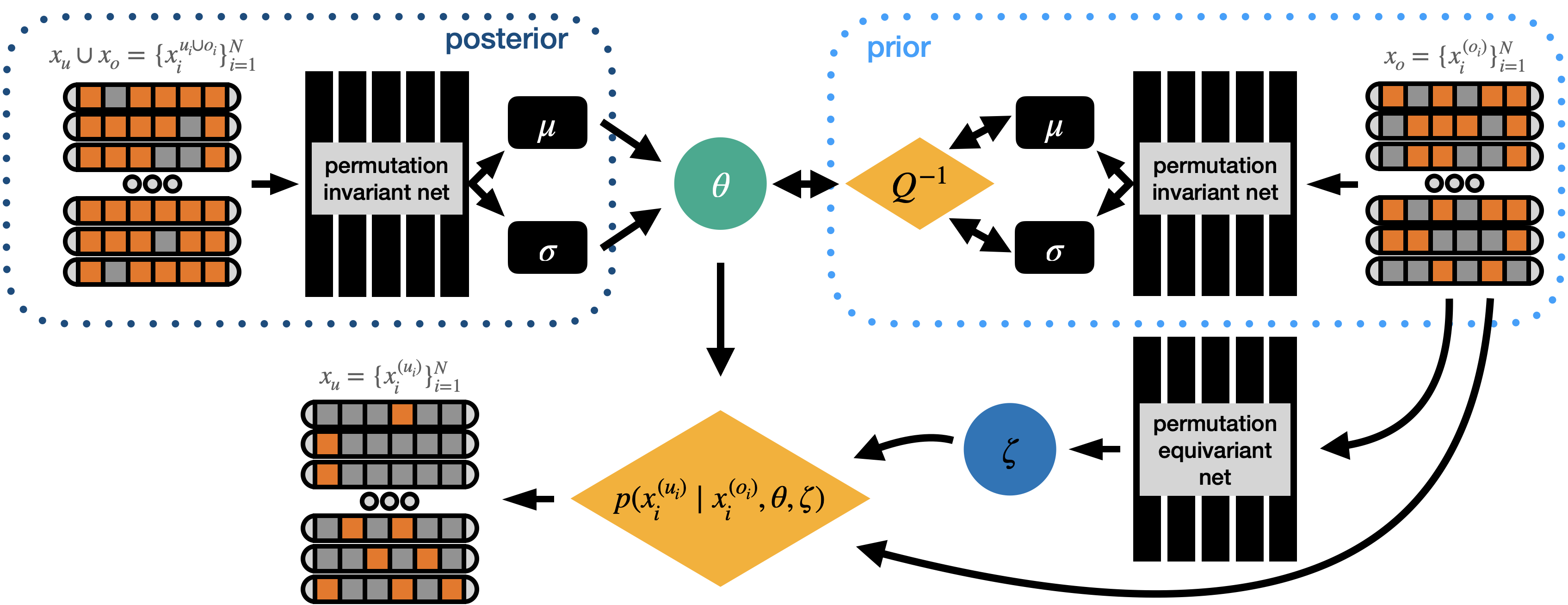}
    \caption{VAE model for partially observed exchangeable modeling.}
    \label{fig:vae}
\end{figure*}

\subsection{Implementation}
In this section, we describe some implementation details of \method that are important for good empirical performance. Please refer to Sec.~\ref{sec:nets} in the appendix for more details. Given the ELBO objectives defined in \eqref{eq:ac_set_elbo}, \eqref{eq:np_elbo} and \eqref{eq:mtnp_elbo}, it is straightforward to implement them as conditional VAEs. Please see Fig.~\ref{fig:vae} for an illustration. The posterior and prior are implemented with permutation invariant networks. For sets of vectors (such as point clouds), we first use Set Transformer \cite{lee2019set} to extract a permutation equivariant embedding, then average over the sets. For sets of images, we use a convolutional neural network to process each image independently and take the mean embedding over the sets. For sets of sets/functions, Set Transformer (with a global pooling) is used to extract the embedding for each function respectively, then the average embedding is taken as the final permutation invariant embedding. Index variables are tiled as the same sized tensor as the corresponding inputs and concatenated together. The posterior is then defined as a Gaussian distribution, where the mean and variance are derived from the set representation. The prior is defined as a normalizing flow model $Q$ with base distribution defined as a Gaussian conditioned on the set representation. The KL-divergence terms are calculated by Monte-Carlo estimation:
$D_{KL}(q \| p) = -H(q) - \mathbb{E}_{q} \log p$, where both $H(q)$ and $\log p$ are tractable.

In addition to the permutation invariant latent code, we also use a permutation equivariant embedding of $x_o$ to assist the learning of the arbitrary conditional likelihood. For a set of vectors, we use Set Transformer to capture the inter-dependencies. For images, Set Transformer is computationally too expensive. Therefore, we propose to decompose the computation across spatial dimensions and set dimension. Specifically, for a set of images $\{x_i\}_{i=1}^{N}$, shared convolutional layers are applied to each set element, and self-attention layers are applied to each spatial position. Such layers and pooling layers can be stacked alternately to extract a permutation equivariant embedding for a set of images. For sets of sets $\{\{x_{ki}\}_{i=1}^{N_k}\}_{k=1}^{K}$, the permutation equivariant embedding contains two parts. One part is the self attention embedding that attends only to the features in the same set (i.e., same $k$) $\{\textbf{SelfAttention}(\{x_{ki}\}_{i=1}^{N_k})\}_{k=1}^{K}$. Note that \textbf{SelfAttention} outputs a feature vector for each element, which is a weighted sum of a certain embedding from each element. Another part is the attention embedding across different sets $\{\frac{1}{K-1}\sum_{k'=1}^{K}\textbf{CrossAttention}(\{x_{ki}\}_{i=1}^{N_k}, \{x_{k'j}\}_{j=1}^{N_{k'}})\mathbbm{1}(k \neq k')\}_{k=1}^{K}$. For each element in the query set, the cross attention outputs an attentive embedding over the key set.

Given the permutation equivariant embedding of $\mathbf{x}_o$ (denoted as $\zeta$), the arbitrary conditionals $p(x_i^{(u_i)} \mid x_i^{(o_i)}, \theta)$, $p(x_i^{(u_i)} \mid x_i^{(o_i)}, t_i, \theta)$ and $p(x_{ki}^{(u_{ki})} \mid x_{ki}^{(o_{ki})}, t_{ki}, \psi)$ in \eqref{eq:ac_set_elbo}, \eqref{eq:np_elbo} and \eqref{eq:mtnp_elbo} are rewritten as $p(x_i^{(u_i)} \mid x_i^{(o_i)}, \theta, \zeta)$, $p(x_i^{(u_i)} \mid x_i^{(o_i)}, t_i, \theta, \zeta)$ and $p(x_{ki}^{(u_{ki})} \mid x_{ki}^{(o_{ki})}, t_{ki}, \psi, \zeta)$ respectively, which can be implemented by any arbitrary conditional models \cite{li2019flow,ivanov2018variational,douglas2017universal}. Here, we choose ACFlow for most of the experiments and modify it to a conditional version, where both the transformations and the base likelihood are conditioned on the corresponding tensors. For low-dimensional data, such as 1D function approximation, a simple feed-forward network that maps the conditioning tensor to a Gaussian distribution also works well.

\section{Experiments}
In this section, we conduct extensive experiments with \method for the aforementioned applications. In order to verify the effectiveness of the set level dependencies, we compare to a model that treats each set element as independent input (denoted as IDP). IDP uses the same architecture as the decoder of \methodnoindent. We also compare to some specially designed approaches for each application. Due to space limitations, we put the experimental details and additional results in the appendix. In this work, we emphasize the versatility of \methodnoindent, and note that certain domain specific techniques may further improve performance, which we leave for future works.

\begin{figure}
    \centering
    \subfigure[MNIST]{
    \includegraphics[width=\linewidth]{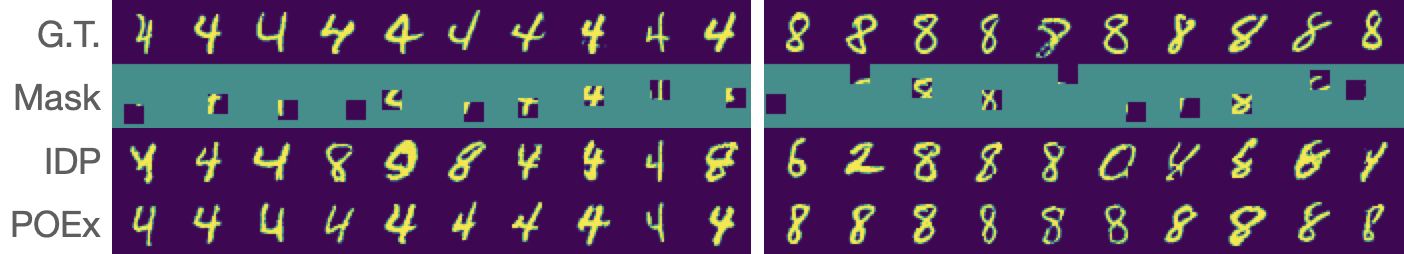}}
    \subfigure[Omniglot]{
    \includegraphics[width=\linewidth]{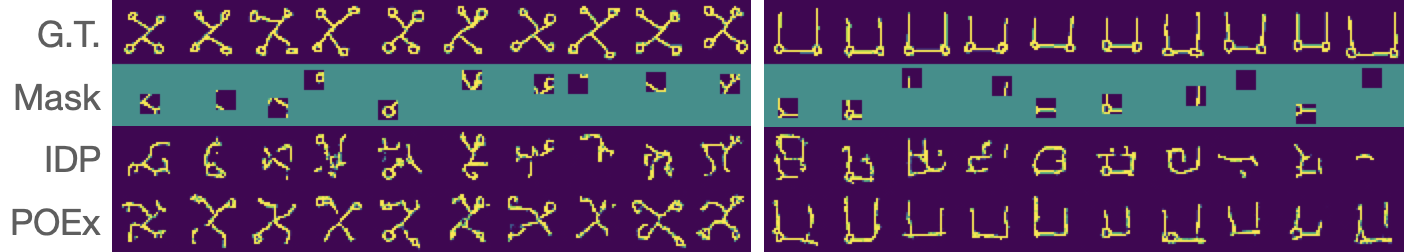}}
    \subfigure[Omniglot from unseen classes]{
    \includegraphics[width=\linewidth]{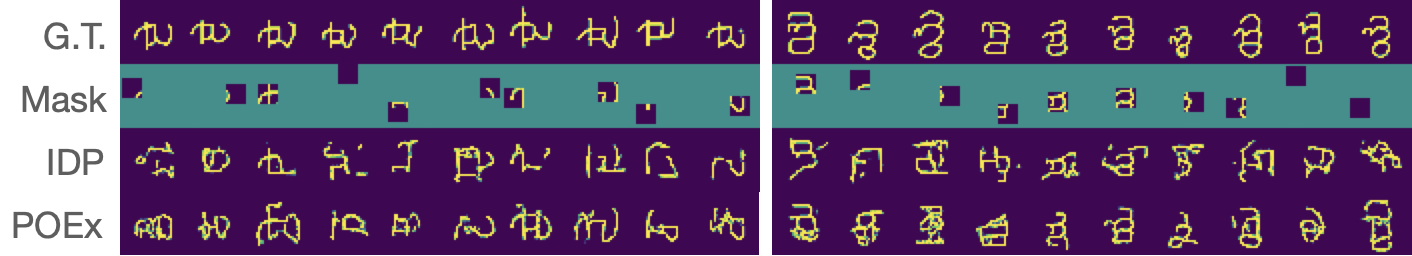}\label{fig:inpainting_unseen}}
    \vspace{-15pt}
    \caption{Inpaint the missing values for a set of images.}
    \label{fig:inpainting}
    \vspace{-15pt}
\end{figure}

\begin{wraptable}{r}{0.5\linewidth}
    \centering
    \vspace{-15pt}
    \caption{PSNR of inpainting sets of images.}
    \label{tab:inpainting}
    \small
    \begin{tabular}{c|c|c}
    \toprule
         &  MNIST & Omniglot\\
    \midrule
        TRC & 7.80 & 8.87 \\
        IDP & 11.38 & 11.49 \\
        \method & \textbf{13.02} & \textbf{12.09} \\
    \bottomrule
    \end{tabular}
\end{wraptable}

We first utilize our \method model to impute missing values for a set of images from MNIST and Omniglot datasets, where several images from the same class are considered a set. We consider a setting where only a small portion of pixels are observed for each image. Figure \ref{fig:inpainting} and Table \ref{tab:inpainting} compare the results for \methodnoindent, IDP, and a tensor completion based approach TRC \cite{wang2017efficient}. The results demonstrate clearly that the dependencies among set elements can significantly improve the imputation performance. Even when the given information is limited for each image, our \method model can still accurately recover the missing parts. TRC fails to recover any meaningful structures for both MNIST and Omniglot, see Fig.~\ref{fig:inpainting_supp} for several examples. Our \method model can also perform few-shot imputation on unseen classes, see Fig.~\ref{fig:inpainting_unseen} for several examples.

\begin{figure}[h]
    \centering
    \includegraphics[width=\linewidth]{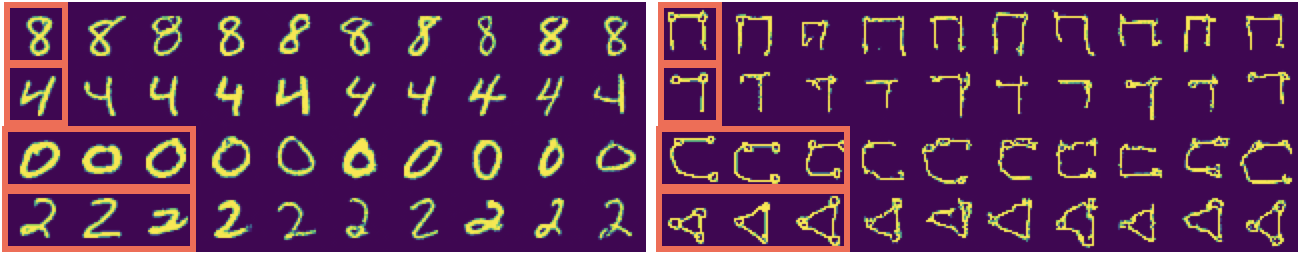}
    \vspace{-15pt}
    \caption{Expand a set by generating similar elements. Red boxes indicate the given elements. Left: MNIST. Right: Omniglot.}
    \label{fig:expansion}
\end{figure}

If we change the distribution of the masks so that some elements are fully observed, our \method model can perform set expansion by generating similar elements to the given ones. Figure \ref{fig:expansion} shows several examples for MNIST and Omniglot datasets. Our \method model can generate realistic and novel images even if only one element is given.

\begin{figure}[h]
    \centering
    \includegraphics[width=\linewidth]{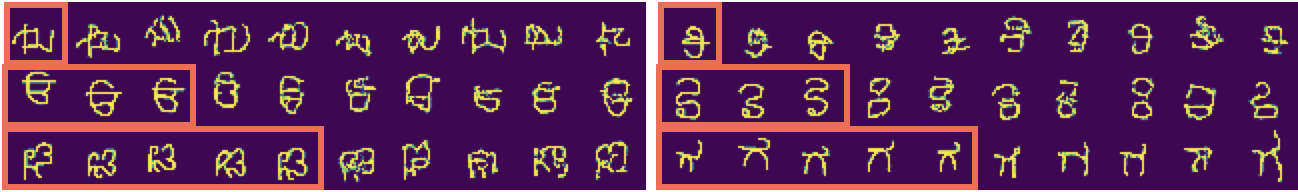}
    \vspace{-15pt}
    \caption{Few-shot generation with unseen Omniglot characters.}
    \label{fig:fewshot}
\end{figure}

\begin{wraptable}{r}{0.45\linewidth}
\vspace{-10pt}
    \centering
    \vspace{-10pt}
    \caption{5-way-1-shot classification with MAML.}
    \label{tab:fewshot}
    \small
    \begin{tabular}{cc}
    \toprule
        Algorithm &  Acc.\\
    \midrule
        MAML & 89.7 \\
        MAML(aug=5) & 93.8 \\
        MAML(aug=10) & 94.7 \\
        MAML(aug=20) & 95.1 \\
    \bottomrule
    \end{tabular}
\end{wraptable}

To further test the generalization ability of \methodnoindent, we provide the model with several unseen characters and utilize the \method model to generate new elements. Figure \ref{fig:fewshot} demonstrates the few-shot generation results given several unseen Omniglot images. We can see the generated images appear similar to the given ones. To quantitatively evaluate the quality of generated images, we perform few-shot classification by augmenting the few-shot support sets with our \method model. We evaluate the 5-way-1-shot accuracy of a fully connected network using MAML \cite{finn2017model}. Table \ref{tab:fewshot} reports the accuracy of MAML with and without augmentation. We can see the few-shot accuracy improves as we provide more synthetic data.

\begin{figure}
    \centering
    \subfigure[completion]{
    \includegraphics[width=0.45\linewidth]{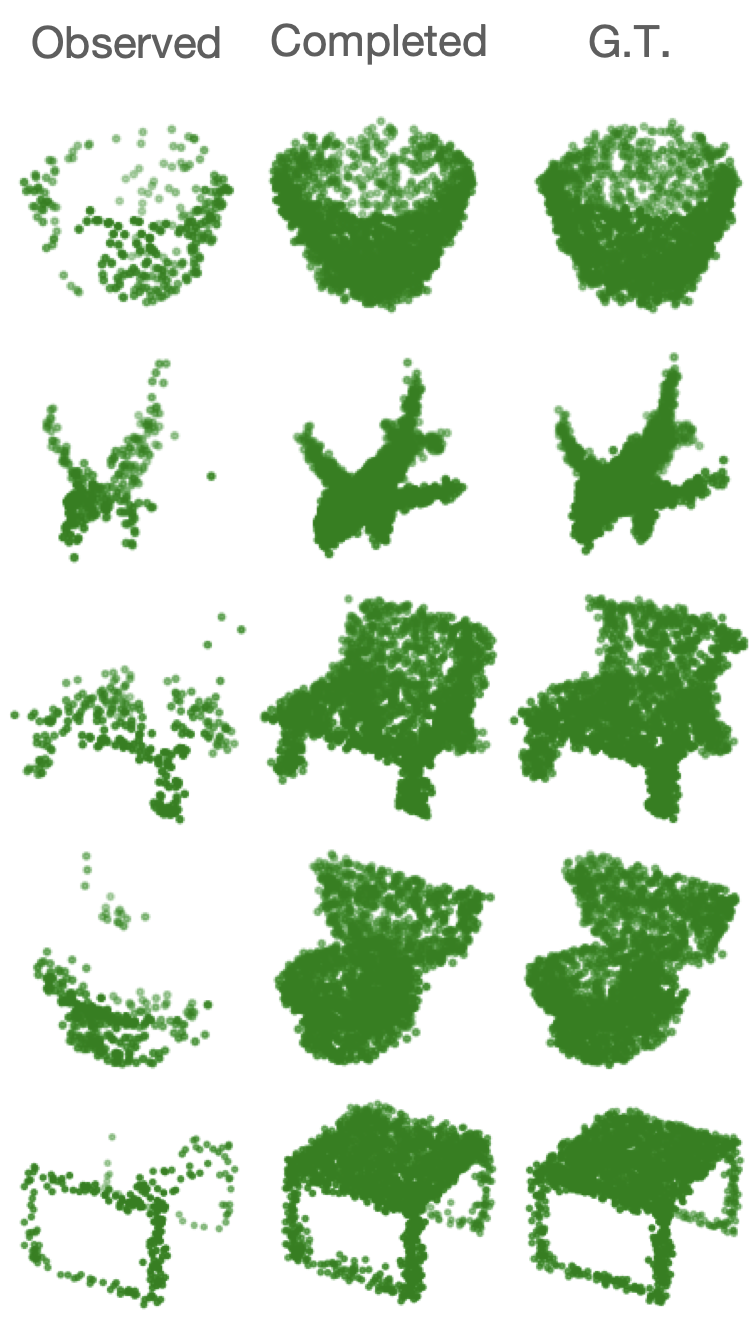}}
    \hfill
    \subfigure[upsampling]{
    \includegraphics[width=0.5\linewidth]{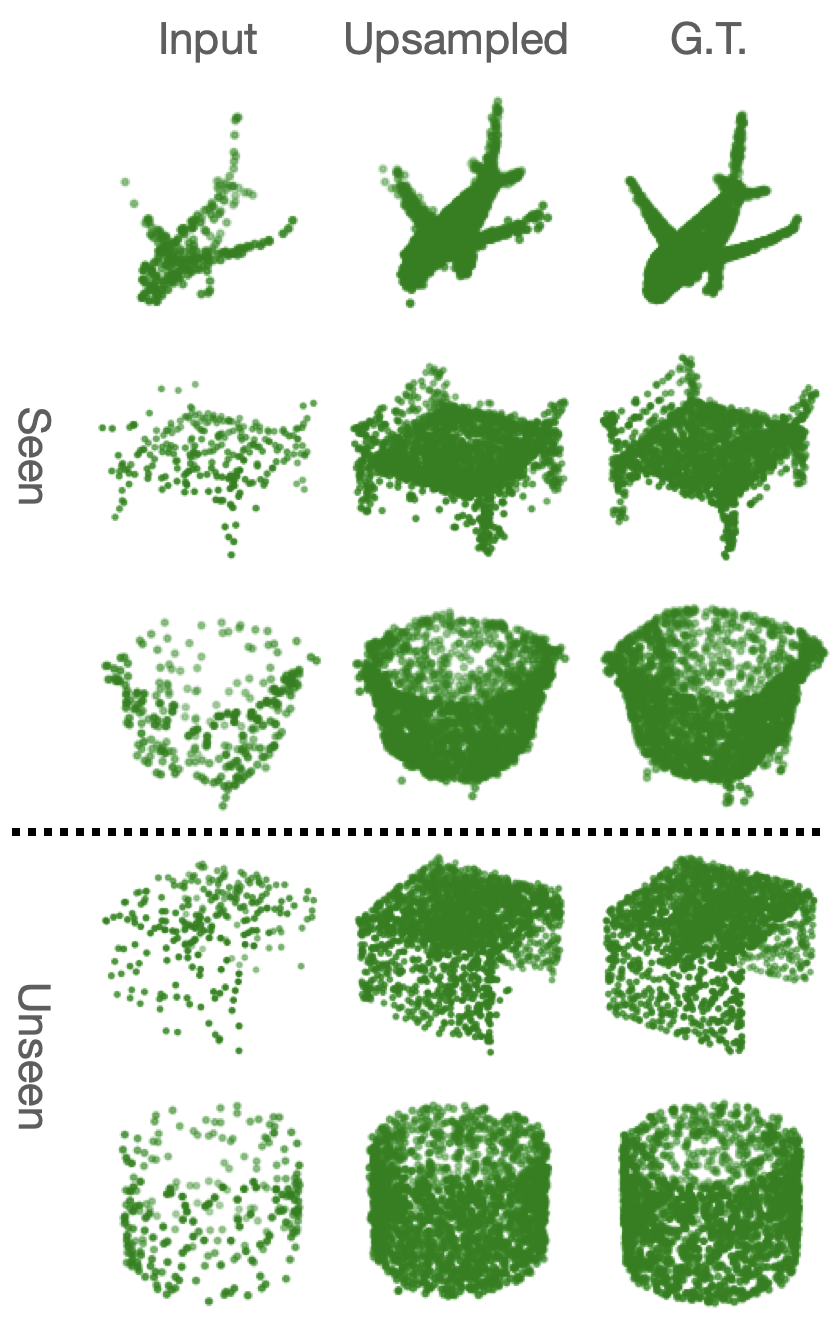}}
    \vspace{-8pt}
    \caption{Point cloud completion and upsampling.}
    \label{fig:pointcloud}
    \vspace{-15pt}
\end{figure}

In addition to sets of images, our \method model can deal with point clouds. Figure \ref{fig:pointcloud} presents several examples for point cloud completion and upsampling. Point cloud completion predicts the occluded parts based on a partial point cloud. 
\begin{wraptable}{r}{0.49\linewidth}
    \centering
    \vspace{-15pt}
    \caption{Point cloud completion.}
    \label{tab:completion}
    \footnotesize
    \begin{tabular}{ccc}
    \toprule
         &  CD & EMD\\
    \midrule
        PCN & 0.0033 & 0.1393\\
        \method & 0.0044 & 0.0994\\
    \bottomrule
    \end{tabular}
\end{wraptable}
Partial point clouds are common in practice due to limited sensor resolution and occlusion. We use the dataset created by \citet{wang2020pre}, where the point cloud is self occluded due to a single camera view point. We sample 256 points uniformly from the observed partial point cloud to generate 2048 points from the complete one using our \method model. For comparison, we train a PCN \cite{yuan2018pcn} using the same dataset. PCN is specially designed for the completion task and uses a multi-scale generation process. For quantitative comparison, we report the Chamfer Distance (CD) and Earth Mover's Distance (EMD) in Table \ref{tab:completion}. Despite the general purpose of our \method model, we achieve comparable performance compared to PCN.

\begin{wraptable}{r}{0.5\linewidth}
    \centering
    \vspace{-15pt}
    \caption{Point cloud upsampling.}
    \label{tab:upsampling}
    \tiny
    \begin{tabular}{c|c|cc}
    \toprule
        & & PUNet & \method \\
    \midrule
        \multirow{2}{*}{Seen} & CD & 0.0025 & 0.0035 \\
        & EMD & 0.0733 & 0.0880 \\
    \midrule
        \multirow{2}{*}{Unseen} & CD & 0.0031 & 0.0048 \\
        & EMD & 0.0793 & 0.1018\\
    \bottomrule
    \end{tabular}
    \vspace{-5pt}
\end{wraptable}

For point cloud upsampling, we use the ModelNet40 dataset. We uniformly sample 2048 points as the target and create a low resolution point cloud by uniformly sampling a subset. Note we use arbitrary sized subset during training. For evaluation, we upsample a point cloud with 256 points. We use PUNet \cite{yu2018pu} as the baseline, which is trained to upsample 256 points to 2048 points. Table \ref{tab:upsampling} reports the CD and EMD between the upsampled point clouds and the ground truth. We can see our \method model produces slightly higher distances, but we stress that our model is not specifically designed for this task, nor was it trained w.r.t. these metrics. We believe some task specific tricks, such as multi-scale generation and folding \cite{yang2018foldingnet}, can help improve the performance further, which we leave as future work. Similar to the image case, we can also generalize a pretrained \method model to upsample point clouds in unseen categories.

In contrast to upsampling, we propose using our \method model to compress a given point cloud. Here we use a \method model trained for airplane to summarize 2048 points into 256 points. To showcase the significance of leveraging set dependencies, we simulate a non-uniformly captured point cloud, where points close to the center have higher probability of being captured by the sensor.
\begin{wrapfigure}{r}{0.63\linewidth}
    \centering
    \vspace{-8pt}
    \includegraphics[width=\linewidth]{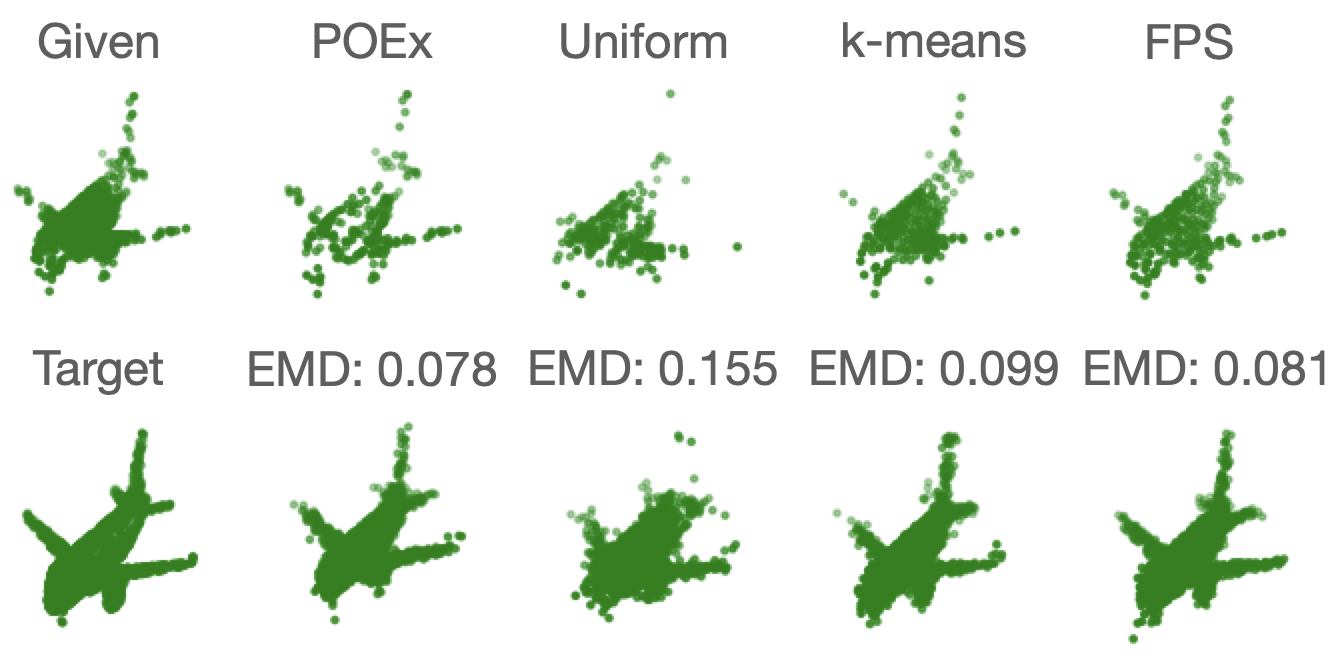}
    \vspace{-20pt}
    \caption{Point cloud compression.}
    \label{fig:compression}
    \vspace{-5pt}
\end{wrapfigure}
We expect the compressed point cloud to preserve the underlying geometry, thus we evaluate the distance between the recovered point cloud and a uniformly sampled one. Figure \ref{fig:compression} compares the compression performance with several sampling approaches, where FPS represents the farthest point sampling \cite{qi2017pointnet}. We can see the baselines tend to select center points more frequently, while \method distributes the points more evenly. Quantitative results (Fig.\ref{fig:compression}) verify the superiority of \method for compression.

\begin{wrapfigure}{r}{0.5\linewidth}
\vspace{-12pt}
\begin{minipage}{0.5\linewidth}
    \includegraphics[width=\linewidth]{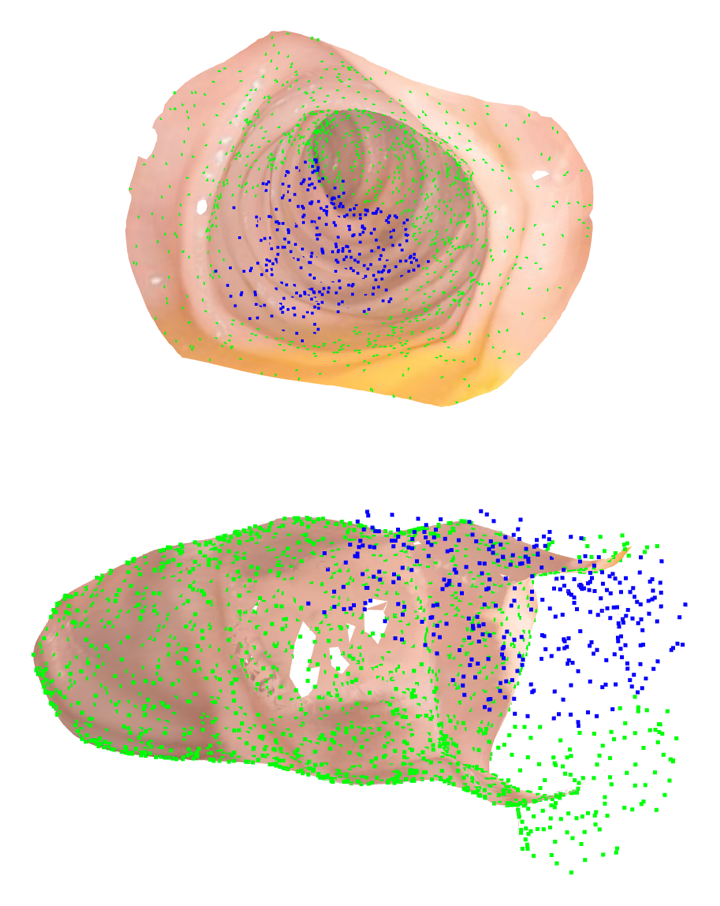}
\end{minipage}
\begin{minipage}{0.48\linewidth}
    \vspace{-10pt}
    \caption{Impute missing points for colonoscopy data. Green: observed. Blue: imputed.}
    \label{fig:colon}
\end{minipage}
\vspace{-10pt}
\end{wrapfigure}

In addition to these synthetic point cloud data, we also evaluate on a real-world colonoscopy dataset. We uniformly sample 2048 points from the meshes and manually drop some points to simulate the blind spots. Our \method model is then used to predict those missing points. To provide guidance about where to fill in those missing points, we divide the space into small cubes and pair each point with its cube coordinates. Missing points are then predicted conditioned on their cube coordinates. Figure \ref{fig:colon} presents several imputed point clouds mapped onto their corresponding meshes. We can see the imputed points align well with the meshes.

\begin{figure}
    \centering
    \includegraphics[width=\linewidth]{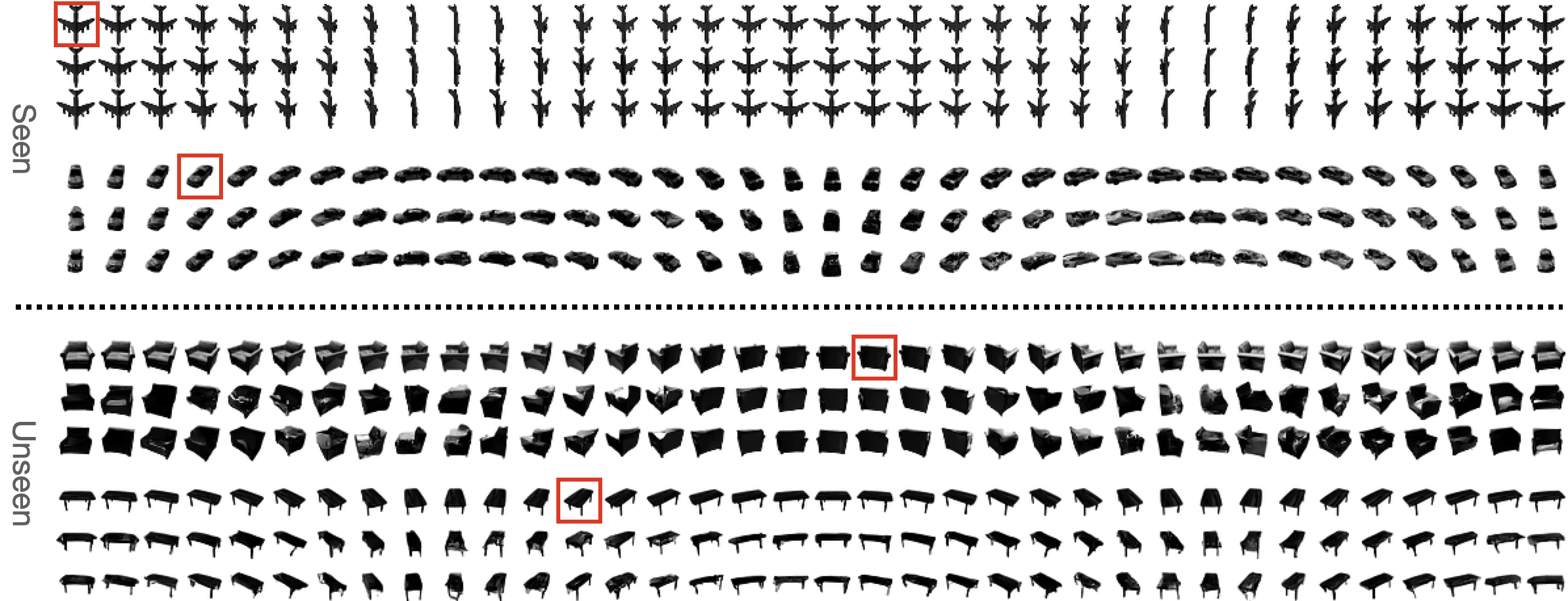}
    \vspace{-18pt}
    \caption{Neural processes on ShapeNet. First row: ground truth, red boxes indicate the context. Second row: predicted views given the context. Third row: predicted views from unseen angles.}
    \label{fig:process}
    \vspace{-5pt}
\end{figure}

The conditional version of \method can be viewed as a neural process which learns a distribution over functions. Instead of modeling low-dimensional functions, we model a process over images here. We evaluate on ShapeNet dataset \cite{chang2015shapenet}, which is constructed by viewing the objects from different angles. Our \method model takes several images from random angles as context and predicts the views for arbitrary angles. Figure \ref{fig:process} presents several examples for both seen and unseen categories from ShapeNet. We can see our \method model generates sharp images and smoothly transits between different viewpoints given just one context view. Our model can also generalize to unseen categories. 
\begin{wraptable}{r}{0.5\linewidth}
    \centering
    \vspace{-15pt}
    \caption{Bpd for generating 10 views given one random view.}
    \label{tab:process}
    \small
    \begin{tabular}{ccc}
    \toprule
         & Seen & Unseen \\
    \midrule
        cBRUNO & 1.43 & 1.62 \\
        \method & \textbf{1.34} & \textbf{1.41} \\
    \bottomrule
    \end{tabular}
\end{wraptable}
Conditional BRUNO \cite{korshunova2020conditional} trained with the same dataset sometimes generates images not in the same class as the specified context, while \method generation always matches with the context classes. Please see Fig.~\ref{fig:cbruno} for additional examples. In Table \ref{tab:process}, we report the bits per dimension (bpd) for generating a sequence of views given one context. Our model achieves lower bpd on both seen and unseen categories.

\begin{figure}
    \centering
    \subfigure[Occlusion removal]{
    \includegraphics[width=0.52\linewidth]{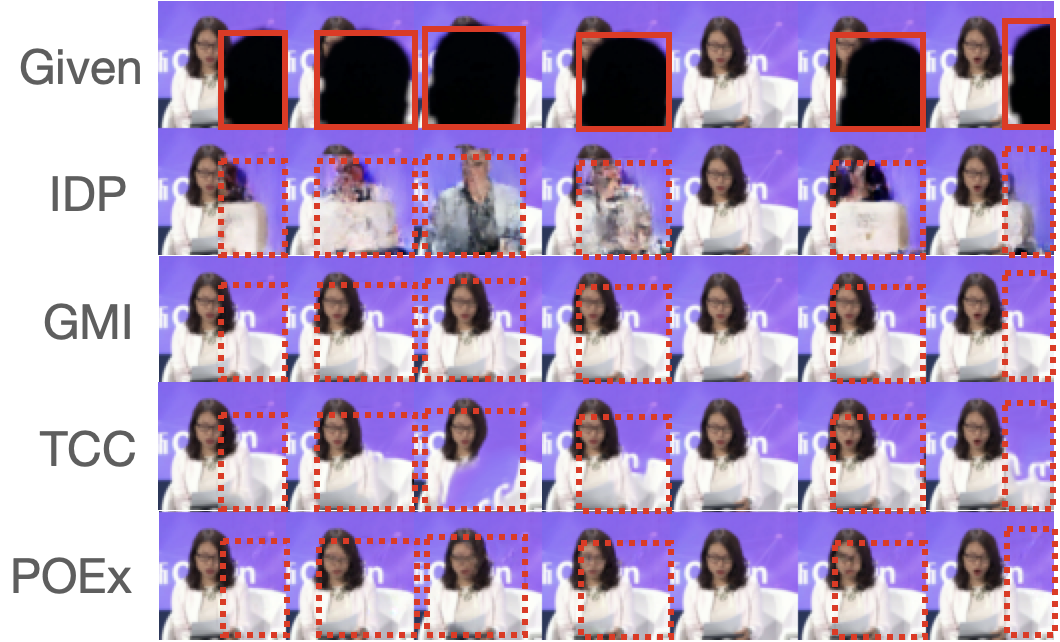}}
    \subfigure[Youtube]{\includegraphics[width=0.443\linewidth]{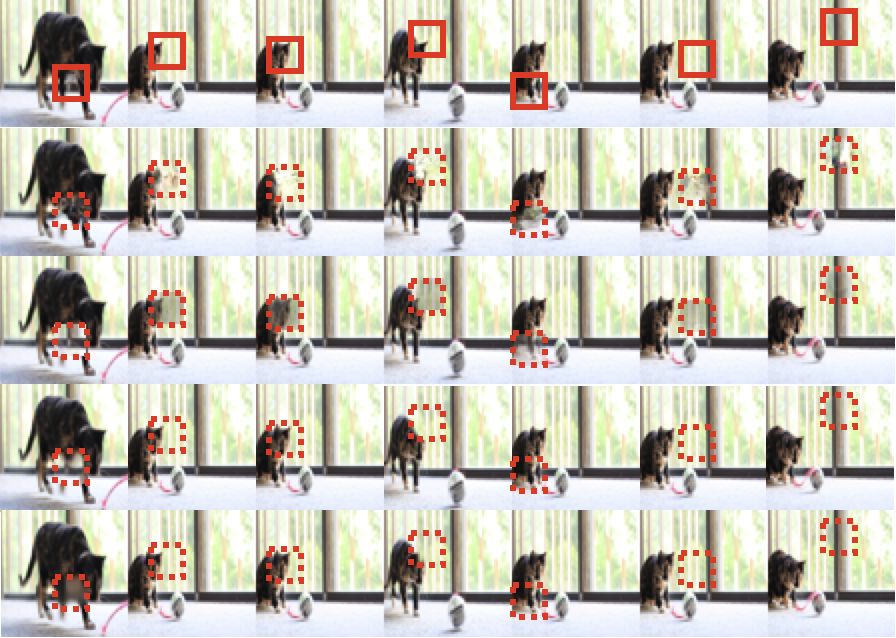}}
    \vspace{-10pt}
    \caption{Video inpainting. Better viewed with zoom-in.}
    \label{fig:video}
    \vspace{-15pt}
\end{figure}

\begin{wraptable}{r}{0.56\linewidth}
    \centering
    \vspace{-18pt}
    \caption{Video inpainting.}
    \label{tab:video}
    \tiny
    \begin{tabular}{c|cc|cc}
    \toprule
         & \multicolumn{2}{c|}{Occlusion} & \multicolumn{2}{c}{Youtube}\\
         & PSNR & SSIM & PSNR & SSIM \\
    \midrule
        IDP & 15.01 & 0.77 & 15.10 & 0.95 \\
        GMI & 19.85 & 0.82  & 16.49 & 0.96 \\
        TCC & \textbf{31.35} & 0.84 & \textbf{30.18} & 0.98 \\
        \method & 21.69 & \textbf{0.92} & 21.62 & \textbf{0.99} \\
    \bottomrule
    \end{tabular}
    \vspace{-5pt}
\end{wraptable}

With a conditional version of \methodnoindent, we can consider a collection of video frames conditioned on their timestamps as a set. Figure \ref{fig:video} shows the inpainting results on two video datasets from \citet{liao2020occlusion} and \citet{xu2018youtube}, and Table \ref{tab:video} reports the quantitative results. In addition to IDP, we compare to group mean imputation (GMI) \cite{sim2015missing} and TCC \cite{huang2016temporally}, which utilizes optical flow to infer the correspondence between adjacent frames. We can see \method outperforms IDP and GMI. There is still room for improvement with the help of optical flow, but we leave it for future works. GMI works well only if the content in the video does not move much. TCC does not work when the missing rate is high due to the difficulty of estimating the optical flow. Please see Fig.~\ref{fig:video_supp} and \ref{fig:youtube_supp} for additional examples.

\begin{figure}
    \centering
    \subfigure[Multi-task Gaussian processes]{
    \includegraphics[width=\linewidth]{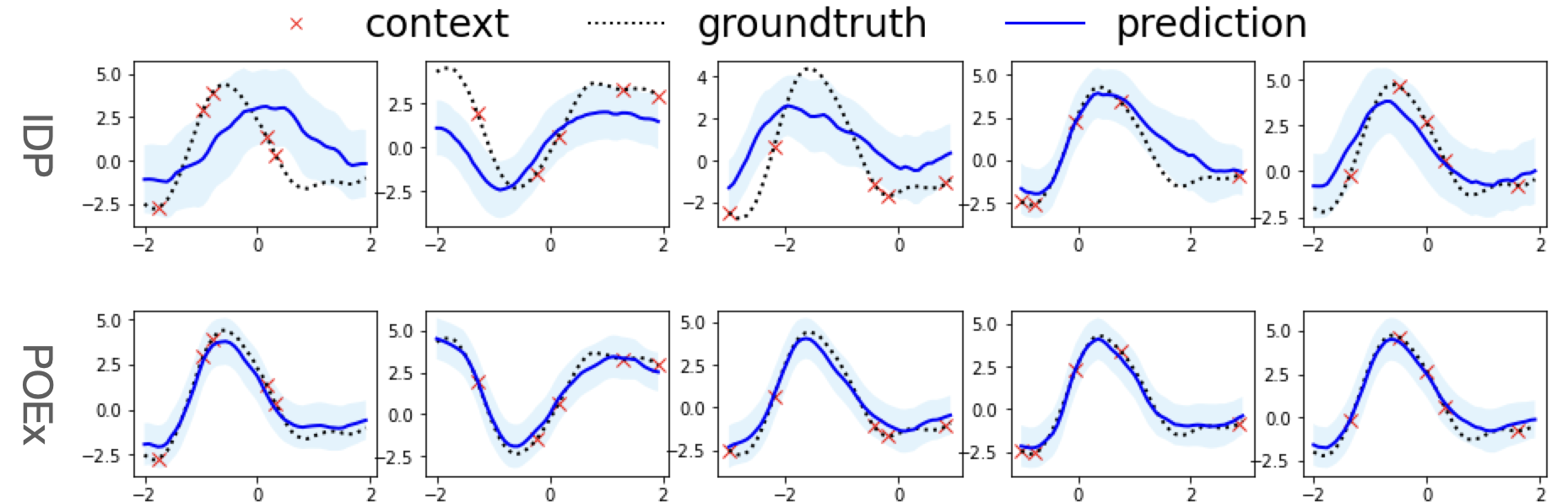}}
    \subfigure[MNIST with 50 context points]{\includegraphics[width=\linewidth]{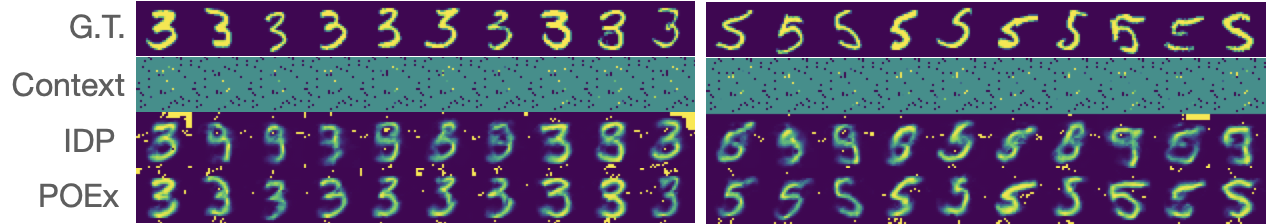}}
    \vspace{-10pt}
    \caption{Modeling a set of functions.}
    \label{fig:functions}
    \vspace{-15pt}
\end{figure}

Further generalizing to the infinite dimensional set elements, we propose to model a set of functions using our \method model. Similar to Neural Processes, we evaluate on Gaussian processes and simulated functions from images. 
\begin{wraptable}{r}{0.48\linewidth}
    \centering
    \vspace{-15pt}
    \caption{NLL for modeling a set of functions.}
    \label{tab:functions}
    \small
    \begin{tabular}{ccc}
    \toprule
         &  MTGP & MNIST \\
    \midrule
       IDP  & 2.04 & -1.08 \\
       \method & \textbf{1.79} & \textbf{-1.10}\\
     \bottomrule
    \end{tabular}
    \vspace{-5pt}
\end{wraptable}
Here, we use multi-task Gaussian processes \cite{bonilla2008multi}. For functions based on images, a set of MNIST images from the same class is used so that the set of functions are correlated. Figure \ref{fig:functions} present examples of modeling a set of correlated functions. We can see our \method model manages to recover the processes with low uncertainty using just a few context points, while the IDP model that treat each element independently fails. Table \ref{tab:functions} reports the negative log likelihood (NLL), and \method model obtains lower NLL on both datasets.

\section{Conclusion}
In this work, we develop the first model to work with sets of partially observed elements. Our \method model captures the intra-instance and inter-instance dependencies in a holistic framework by modeling the conditional distributions of the unobserved part conditioned on the observed part. We further reinterpret various applications as partially observed set modeling tasks and apply \method to solve them. \method is versatile and performs well for many challenging tasks even compared with domain specific approaches. For future works, we will explore domain specific architectures and techniques to further improve the performance.

\bibliography{main}
\bibliographystyle{icml2021}

\clearpage
\appendix

\setcounter{theorem}{0}
\setcounter{figure}{0}
\setcounter{table}{0}
\renewcommand\thefigure{\thesection.\arabic{figure}}
\renewcommand\thetable{\thesection.\arabic{table}}

\section{Proof}

\begin{theorem}
Given a set of observations $\mathbf{x}=\{x_i\}_{i=1}^{N}$ from an infinitely exchangeable process, denote the observed and unobserved part as $\mathbf{x}_o = \{x_i^{(o_i)}\}_{i=1}^{N}$ and $\mathbf{x}_u = \{x_i^{(u_i)}\}_{i=1}^{N}$ respectively. Then the arbitrary conditional distribution $p(\mathbf{x}_u \mid \mathbf{x}_o)$ can be decomposed as follows:
\begin{equation*}
    p(\mathbf{x}_u \mid \mathbf{x}_o) = \int \prod_{i=1}^{N} p(x_i^{(u_i)} \mid x_i^{(o_i)}, \theta) p(\theta \mid \mathbf{x}_o) d\theta.
\end{equation*}
\end{theorem}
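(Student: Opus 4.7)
The plan is to invoke de Finetti's theorem for the infinitely exchangeable process $\{x_i\}_{i=1}^{N}$, which guarantees the existence of a latent parameter $\theta$ such that the full vectors $x_i$ are conditionally i.i.d.\ given $\theta$:
\begin{equation*}
    p(\{x_i\}_{i=1}^{N}) = \int \prod_{i=1}^{N} p(x_i \mid \theta)\, p(\theta)\, d\theta.
\end{equation*}
This is exactly equation \eqref{eq:de-finetti} already stated in the excerpt, so I can simply cite it.

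Next I would push this conditional independence through to subsets of coordinates. Since each full vector $x_i$ contains all coordinates in $o_i \cup u_i$ (and possibly more), the tuple $(x_i^{(o_i)}, x_i^{(u_i)})$ is a measurable function of $x_i$ alone. Because measurable functions of conditionally independent random variables remain conditionally independent, the pairs $\{(x_i^{(o_i)}, x_i^{(u_i)})\}_{i=1}^{N}$ are independent across $i$ given $\theta$. Applying the chain rule within each factor gives
\begin{equation*}
    p(\mathbf{x}_u, \mathbf{x}_o \mid \theta) = \prod_{i=1}^{N} p(x_i^{(u_i)} \mid x_i^{(o_i)}, \theta)\, p(x_i^{(o_i)} \mid \theta),
\end{equation*}
and dividing by $p(\mathbf{x}_o \mid \theta) = \prod_i p(x_i^{(o_i)} \mid \theta)$ yields
\begin{equation*}
    p(\mathbf{x}_u \mid \mathbf{x}_o, \theta) = \prod_{i=1}^{N} p(x_i^{(u_i)} \mid x_i^{(o_i)}, \theta).
\end{equation*}

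The final step is a standard marginalization. Writing
\begin{equation*}
    p(\mathbf{x}_u \mid \mathbf{x}_o) = \int p(\mathbf{x}_u \mid \mathbf{x}_o, \theta)\, p(\theta \mid \mathbf{x}_o)\, d\theta
\end{equation*}
and substituting the product form above gives the claim. The only subtle point, and the one I would be most careful about, is the justification that the product structure on $x_i$ carries over to the arbitrarily varying pairs $(o_i, u_i)$; this is why I would spell out the measurable-function argument explicitly, since the observation patterns are allowed to differ from element to element and could be $\sigma(\mathbf{x}_o)$-measurable (e.g.\ chosen based on other observed features) without breaking the argument, as long as the $o_i$ are treated as given alongside the conditioning event.
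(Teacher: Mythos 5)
Your proof is correct and takes essentially the same route as the paper's: de Finetti's theorem to render the set elements conditionally i.i.d.\ given $\theta$, a per-element chain rule splitting each factor into $p(x_i^{(u_i)} \mid x_i^{(o_i)}, \theta)\, p(x_i^{(o_i)} \mid \theta)$, and a Bayes step converting $p(\mathbf{x}_o \mid \theta)\, p(\theta)$ into $p(\theta \mid \mathbf{x}_o)\, p(\mathbf{x}_o)$ --- the paper manipulates the joint $p(\mathbf{x}_m)$ and divides by $p(\mathbf{x}_o)$ at the very end, while you condition on $\theta$ throughout and marginalize against the posterior, which is the same algebra in a different order. Your explicit measurable-function argument for why conditional independence given $\theta$ passes to the coordinate projections $(x_i^{(o_i)}, x_i^{(u_i)})$ is in fact a sharper justification of the step the paper only asserts (its ``reverse de Finetti'' application to $\mathbf{x}_o$).
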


\begin{proof}
According to the definition, we have $u_i,o_i \subseteq \{1,\ldots,d\}$. Define $m_i = u_i \cup o_i$ and $\mathbf{x}_m = \mathbf{x}_u \cup \mathbf{x}_o = \{x_i^{(m_i)}\}_{i=1}^{N}$, where $\mathbf{x}_m$ represents the set of features that contains both observed and unobserved dimensions.

From the De Finetti’s theorem, we can derive the following equations:
\begin{equation*}
\begin{aligned}
    p(\mathbf{x}_m) &= \int \prod_{i=1}^{N} p(x_i^{(m_i)} \mid \theta) p(\theta) d\theta \\
    &= \int \prod_{i=1}^{N} \left[ p(x_i^{(u_i)} \mid x_i^{(o_i)}, \theta) p(x_i^{(o_i)} \mid \theta) \right] p(\theta) d\theta \\
    &= \int \prod_{i=1}^{N} p(x_i^{(u_i)} \mid x_i^{(o_i)}, \theta) \prod_{i=1}^{N} p(x_i^{(o_i)} \mid \theta) p(\theta) d\theta \\
    &= \int \prod_{i=1}^{N} p(x_i^{(u_i)} \mid x_i^{(o_i)}, \theta) p(\mathbf{x}_o \mid \theta) p(\theta) d\theta \\
    &= \int \prod_{i=1}^{N} p(x_i^{(u_i)} \mid x_i^{(o_i)}, \theta) p(\theta \mid \mathbf{x}_o) p(\mathbf{x}_o) d\theta
\end{aligned}
\end{equation*}
The key step is the penultimate equation, where the De Finetti’s theorem is applied for $\mathbf{x}_o$ in reverse direction. That is, given the same latent code, we assume set elements of $\mathbf{x}_o$ are conditionally i.i.d.. Since $\mathbf{x}_o$ contains subsets of features from $\mathbf{x}_m$, the above assumption holds.

Dividing both side of the equation by $p(\mathbf{x}_o)$ gives
\begin{equation*}
    p(\mathbf{x}_u \mid \mathbf{x}_o) = \int \prod_{i=1}^{N} p(x_i^{(u_i)} \mid x_i^{(o_i)}, \theta) p(\theta \mid \mathbf{x}_o) d\theta
\end{equation*}

\end{proof}

\section{Models}\label{sec:nets}
As shown in Fig.~\ref{fig:vae}, our model mainly contains 4 parts: posterior, prior, permutation equivariant embedding and the generator. The prior can be further divided as base distribution (B) and the flow transformations (Q). Here, we describe the specific architectures used for each component respectively. Please see Table \ref{tab:net} for details. Note we did not tune the network architecture heavily. Further improvement is expected by tuning the network for each task separately.

For simplicity, the posterior and prior are mostly constructed by processing each set element independently then pooling across the set. A Gaussian distribution is then derived from the permutation invariant embedding. For point clouds, we use set transformer to incorporate dependencies among points. For set of functions (each function is represented as a set of (input, target) pairs), we first use set transformer to get an permutation invariant embedding for each function. Then, we take the average pooling over the set as the embedding for the set. The prior utilizes a normalizing flow to increase the flexibility, which is implemented as a stack of invertible transformations over the samples. For images, we use a multi-scale ACFlow as the generator, which is similar to the original model used in \cite{li2019flow}. We modify it to a conditional version so that both the base distribution and the transformations are conditioned on the given embeddings. For point cloud, we use the conditional ACFlow with an autoregressive base likelihood.

\begin{table*}[h]
    \centering
    \caption{Network architectures}
    \label{tab:net}
    \tiny
    \begin{tabular}{c|c|c}
    \toprule
        Data Type                     & Components  & Architecture \\
    \midrule
       \multirow{5}{*}{Set of Images}  & Posterior & [Conv(128,3,1), Conv(128,3,1), MaxP(2,2)]$\times 4$ + FC(256) + SetAvgPool $\to$ Gaussian(128)\\
                                       & Prior (B) & [Conv(128,3,1), Conv(128,3,1), MaxP(2,2)]$\times 4$ + FC(256) + SetAvgPool $\to$ Gaussian(128)\\
                                       & Prior (Q) & [Linear, LeakyReLU, Affine Coupling, Permute]$\times 4$\\
                                       & Peq Embed & [Conv(64,3,1), Conv(64,3,1), MaxP(2,2)]$\times 2$ + [DecomAttn, ResBlock]$\times 4$ + [DeConv(64,3,2), Conv(64,3,1)]$\times 2$\\
                                       & Generator & Conditional ACFlow(multi-scale)\\
    \midrule
       \multirow{5}{*}{Point Clouds}   & Posterior & [SetTransformer(256)]$\times 4$ + SetAvgPool + FC(512) $\to$ Gaussian(256)\\
                                       & Prior (B) & [SetTransformer(256)]$\times 4$ + SetAvgPool + FC(512) $\to$ Gaussian(256)\\
                                       & Prior (Q) & [Linear, LeakyReLU, Affine Coupling, Permute]$\times 4$ \\
                                       & Peq Embed & [SetTransformer(256)]$\times 4$ \\
                                       & Generator & Conditional ACFlow\\
    \midrule
       \multirow{5}{*}{Set of Functions} & Posterior & [SetTransformer(256)]$\times 4$ + SetAvgPool + SetAvgPool + FC(512) $\to$ Gaussian(256) \\
                                         & Prior (B) & [SetTransformer(256)]$\times 4$ + SetAvgPool + SetAvgPool + FC(512) $\to$ Gaussian(256)\\
                                         & Prior (Q) & [Linear, LeakyReLU, Affine Coupling, Permute]$\times 4$\\
                                         & Peq Embed & SelfAttention + CrossAttention\\
                                         & Generator & [FC(256)]$\times 4$ + FC(2) $\to$ Gaussian(1)\\
    \bottomrule
    \end{tabular}
\end{table*}

\section{Experiments}
\subsection{Image Inpainting}
For image inpainting, we evaluate on both MNIST and Omniglot datasets. We construct the sets by randomly selecting 10 images from a certain class. The observed part for each image is a $10 \times 10$ square placed at random positions.

The baseline TRC \cite{wang2017efficient} is based on tensor ring completion. We use their official implementation \footnote{https://github.com/wangwenqi1990/TensorRingCompletion} and cross validate the hyperparameters on our datasets. The set of images with size $[32,32,10]$ are treated as a 3-order tensor and reshaped into a 10-order tensor of size $[4,2,2,2,4,2,2,2,2,5]$. The tensor is then completed by alternating least square method with a specified tensor ring rank (TR-Rank). We found the TR-Rank of 9 works best for our datasets.

Figure \ref{fig:inpainting_supp} presents several additional examples for inpainting a set of images. We can see the TRC fails to recover any meaningful structures, IDP fails to infer the right classes, while our \method model always generates the characters from the specified classes.

\begin{figure}
    \centering
    \subfigure[MNIST]{
    \includegraphics[width=\linewidth]{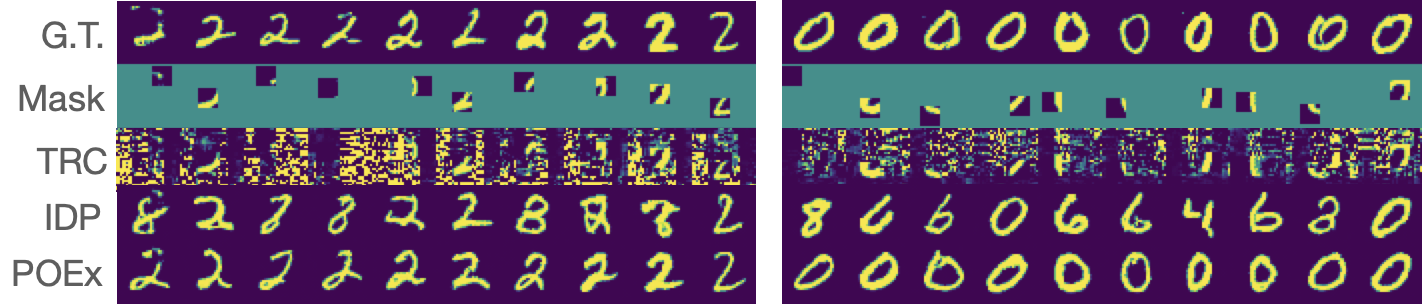}}
    \subfigure[Omniglot]{
    \includegraphics[width=\linewidth]{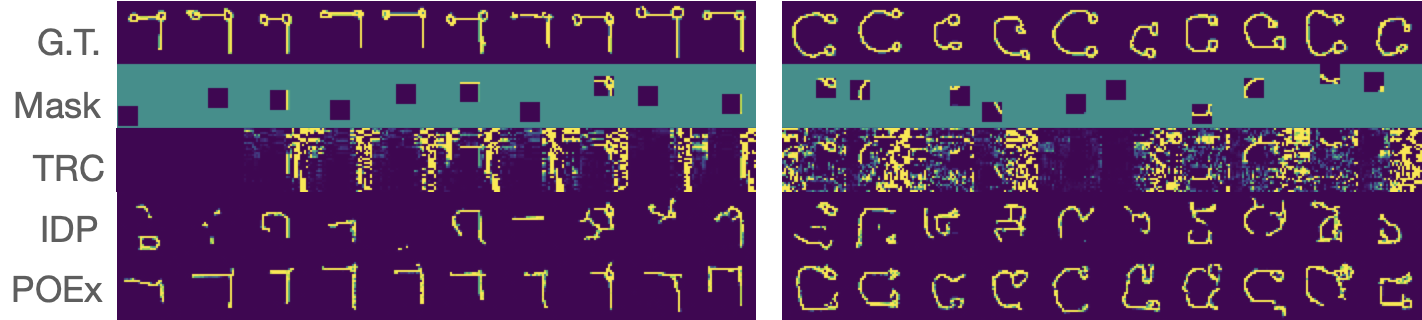}}
    \caption{Inpaint the missing values for a set of images.}
    \label{fig:inpainting_supp}
\end{figure}

\subsection{Image Set Expansion}
To expand a set, we modify the distribution of the masks so that some elements are fully observed and others are fully unobserved. We randomly select the number of observed elements during training. During test, our model can expand the set to arbitrary size.

\subsection{Few-shot Learning}
Since our model can expand sets for even unseen categories, we utilize it to augment the support set for few-shot learning. Given a N-way-K-shot training set, we use the K exemplars from each class to generate M novel instances, thus change the problem to N-way-(M+K)-shot classification. We train the MAML with fully connected networks using the official implementation \footnote{https://github.com/cbfinn/maml}.

\subsection{Point Cloud Completion}
For point cloud completion, we build the dataset by sampling 256 points from the observed part and 1792 points from the occluded part, thus there are 2048 points in total. PCN is trained with a multi-scale architecture, where the given point cloud is first expanded to 512 points and then expanded further to 2048 points. We use EMD loss and CD loss for the coarse and fine outputs respectively.

\begin{figure}
    \centering
    \includegraphics[width=\linewidth]{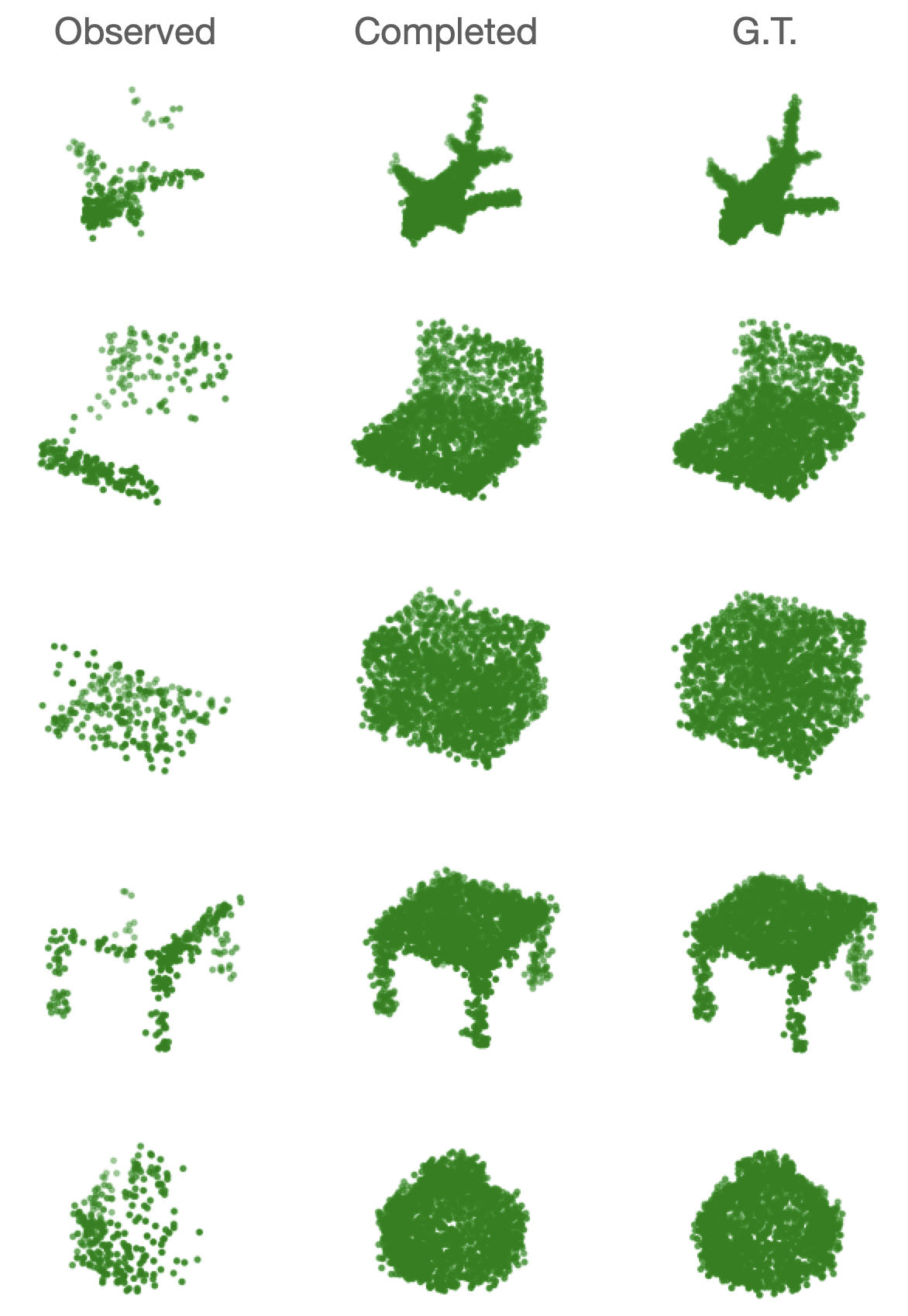}
    \caption{Additional examples for point cloud completion.}
    \label{fig:completion_supp}
\end{figure}

\subsection{Point Cloud Upsampling}
Point cloud upsampling uses the ModelNet40 dataset. We leave 10 categories out that are not used during training to evaluate the generalization ability of our model. We uniformaly sample 2048 points as the target. During training, an arbitrary subset is taken as input. PUNet is not built for upsampling arbitrary sized point cloud, therefore we subsample 256 points as input. PUNet is trained to optimize the EMD loss and a repulsion loss as in their original work. For comparison, we evaluate our \method model and PUNet for upsampling 256 points.

\begin{figure}
    \centering
    \includegraphics[width=\linewidth]{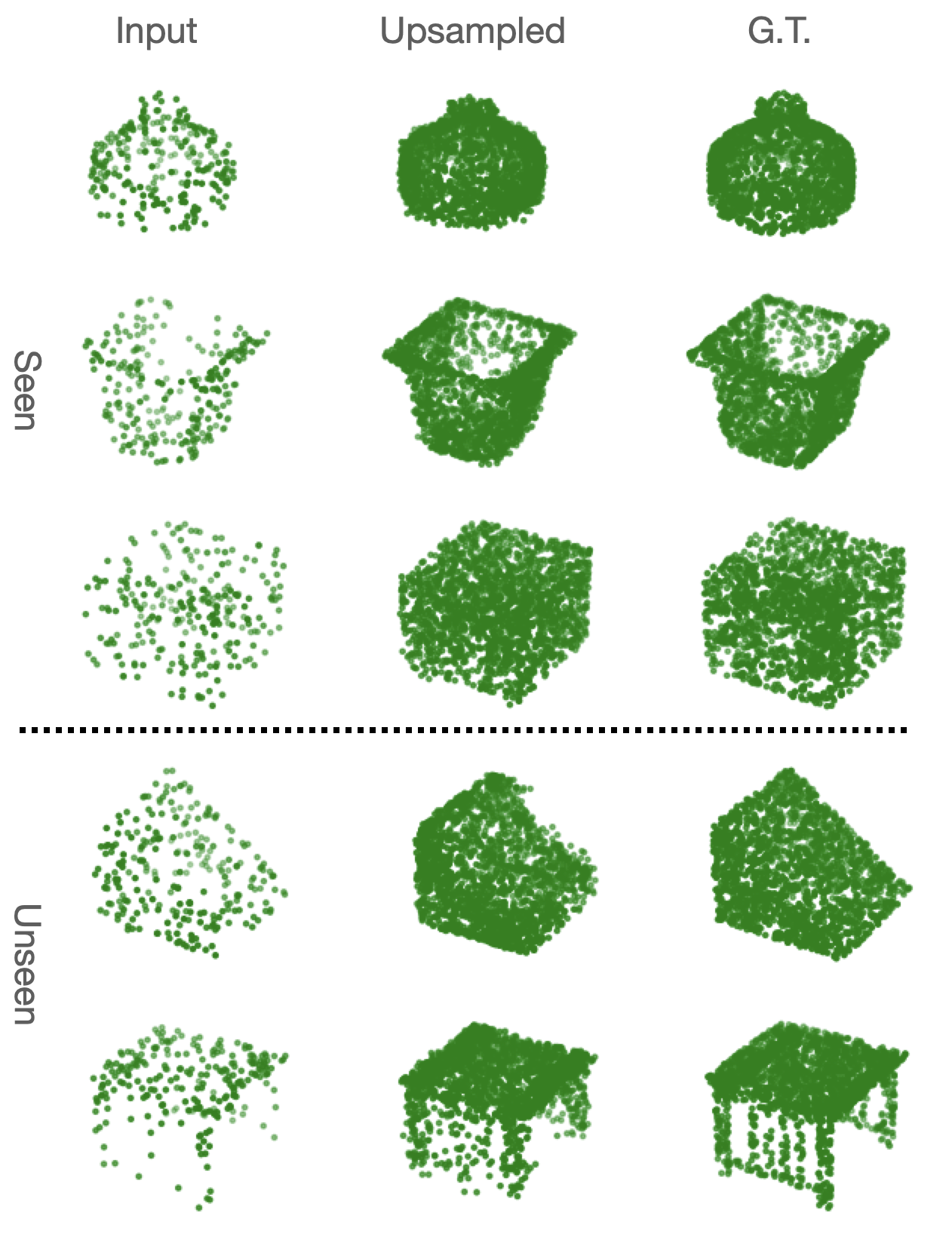}
    \caption{Additional examples for point cloud upsampling.}
    \label{fig:upsampling_supp}
\end{figure}

\subsection{Point Cloud Compression}
One advantage of our \method model is that the likelihood can be used to guide the subset selection. To evaluate the selection quality, we build a biased point cloud by sampling the center points with higher probability. Specifically, 2048 points are selected from the original point cloud with probability $\text{softmax}(\Vert x - m \Vert^2 / T^2)$, where $x$ and $m$ are the coordinates of each candidate points and the center respectively. We use $T=0.1$ for our experiments. Given 2048 non-uniformly sampled points, our goal is to sample 256 points from it to represent the underlying geometry. We propose a sequential selection strategy, where one point is selected per step based on its conditional likelihood. The one with lowest likelihood given the current selected points is selected at each step. Baseline approaches include uniform sampling, where 256 points are uniformly sampled. A k-means based sampling method group the given points into 256 clusters and we select one point from each cluster that is closest to its cluster center. The farthest point sampling algorithm also proceeds sequentially, where the point that is farthest from the current selected points is selected at each step. To quantitatively evaluate the selection quality we reconstruct the selected subset to 2048 points and compare it to a uniformly sampled point cloud. If the selected subset represents the geometry well, the upsampled point cloud should be close to the uniformly sampled one.

\subsection{Colonoscopy Point Cloud Imputation}
We uniformly sample 2048 points from the reconstructed colonoscopy meshes as our training data. During training, points inside a random ball are viewed as unobserved blind spot and our model is trained to impute those blind spots. To provide guidance for the imputation, we divide the space into small cubes and condition our model on the cube coordinates. That is, points inside the cube are indexed by the corresponding cube coordinates. To train the conditional \method model, the indexes and the point coordinates are concatenated together as inputs. Figure \ref{fig:colon_supp} presents several examples of the imputed point clouds.

\begin{figure}
    \centering
    \includegraphics[width=\linewidth]{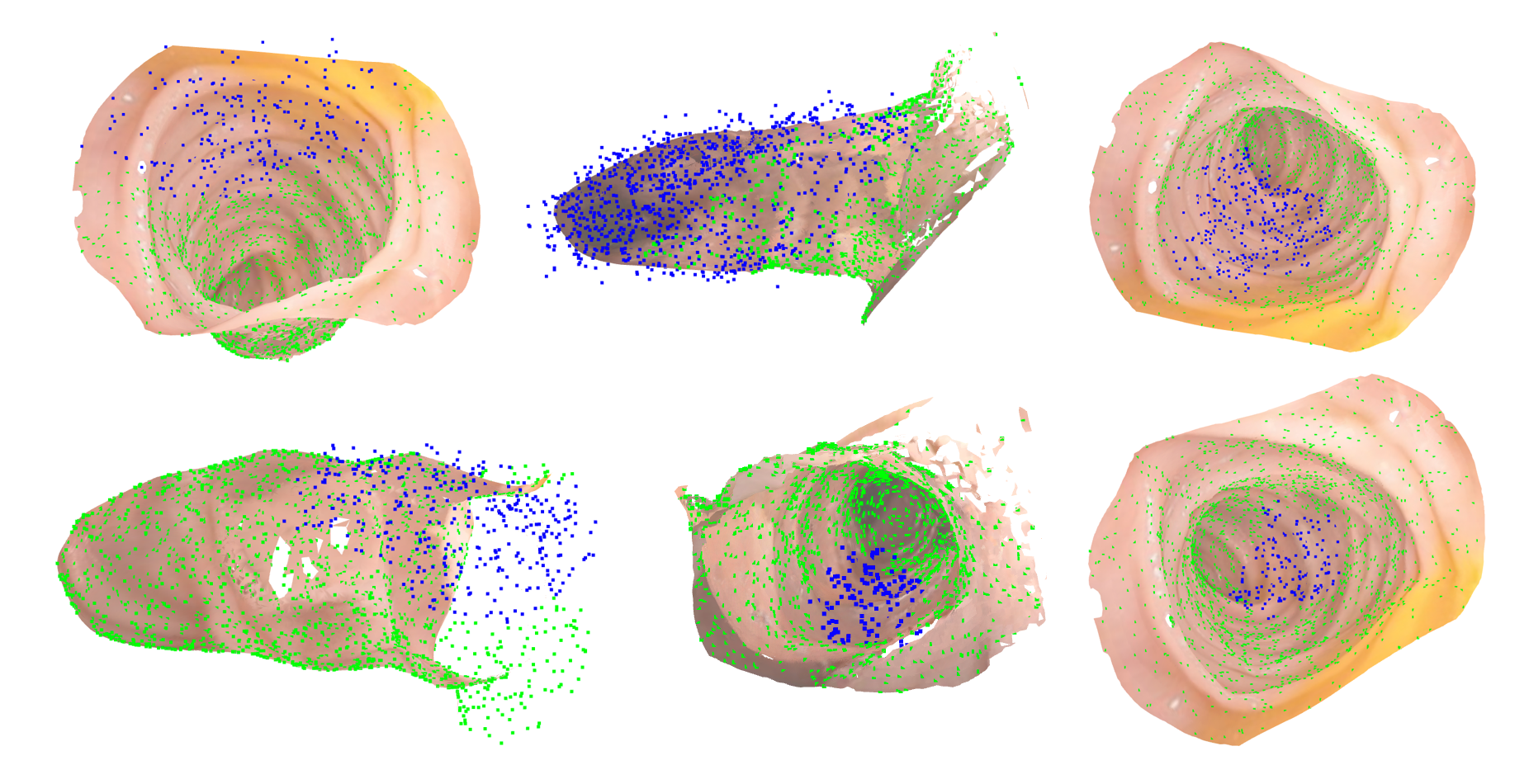}
    \vspace{-10pt}
    \caption{Additional examples for imputing colonoscopy point cloud.}
    \label{fig:colon_supp}
\end{figure}

\subsection{Neural Processes over Images}
The conditional version of \method can be interpreted as a neural process. We evaluate our model on the ShapeNet dataset, where images of a object viewed from different angles are used as the target variables. Given the context that contains images from several random views, the neural processes are expected to generate novel images for arbitrary view points. We split the dataset into seen and unseen categories and train our \method model and the conditional BRUNO only on seen categories. Images are indexed with the continuous angles, which we translate to their sin and cos values. Figure \ref{fig:cbruno} shows several examples of generated images from conditional BRUNO. We can see the generated images do not always match with the provided context.

\begin{figure}
    \centering
    \includegraphics[width=\linewidth]{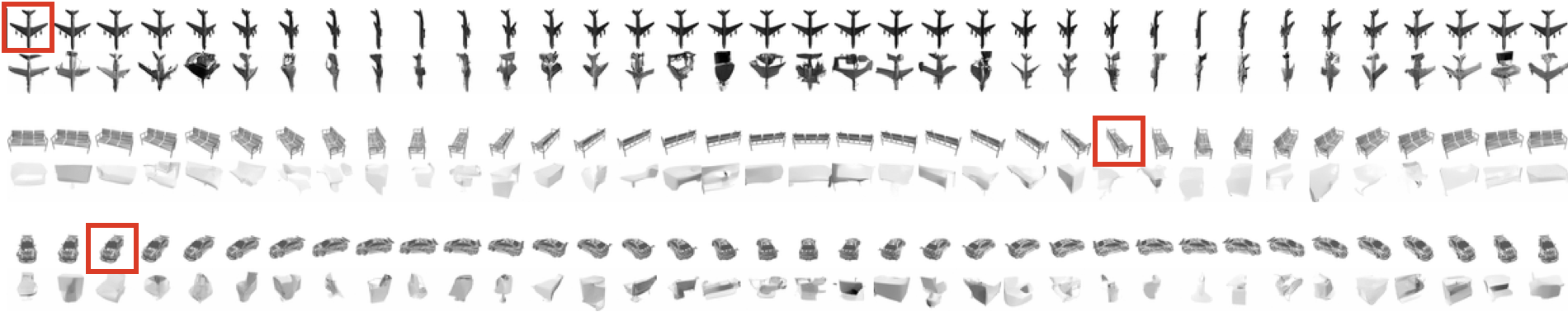}
    \vspace{-10pt}
    \caption{Neural Process sampling from conditional BRUNO. Red coxes indicate the given context. First row: ground truth. Second row: generation.}
    \label{fig:cbruno}
\end{figure}

\subsection{Video Inpainting}
We evaluate our model for video inpainting with two datasets. 10 frames are randomly selected from a video to construct a set. The frames are indexed by their timestamps. We normalize them to the range of [0,1] and further calculate a 32-dimensional positional embedding similar to \cite{vaswani2017attention}. To simulate occluded pixels, we put a $16 \times 16$ square at random position, pixels inside the square are considered missing. For occlusion removal, since we do not have the ground truth values for occluded pixels, those pixels are excluded for training and evaluation. We run TCC \cite{huang2016temporally} using the official code \footnote{https://github.com/amjltc295/Temporally-Coherent-Completion-of-Dynamic-Video} and their recommended hyperparameters. For group mean imputation, if a certain pixel is missing in all frames, we impute it using the global mean of all observed pixels. Figure \ref{fig:video_supp} and \ref{fig:youtube_supp} present additional examples for these two datasets. GMI works well if the movement is negligible, but it fails when the object moves too much. It also struggles if certain pixels are missing in all frames. In the second example of Fig.~\ref{fig:video_supp}, TCC fails to run since the missing rate is too high for certain frames and the optical flow cannot be properly estimated. For quantitative evaluation, we report the PSNR and SSIM scores between the ground truth and the imputed images. We calculate PSNR only for the missing part and the SSIM for the whole image.

\begin{figure}
    \centering
    \includegraphics[width=\linewidth]{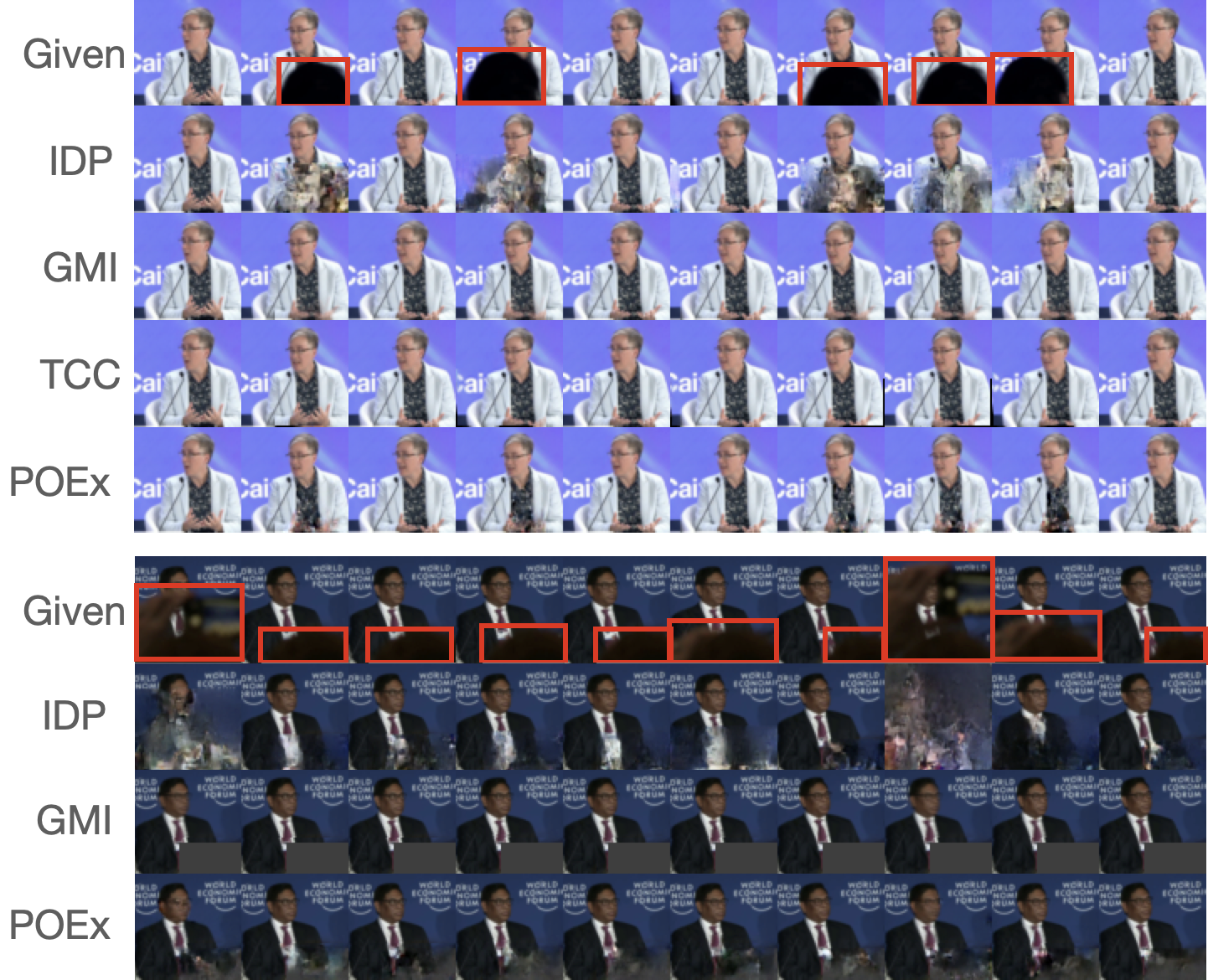}
    \vspace{-10pt}
    \caption{Additional examples for occlusion removal.}
    \label{fig:video_supp}
\end{figure}

\begin{figure}
    \centering
    \includegraphics[width=\linewidth]{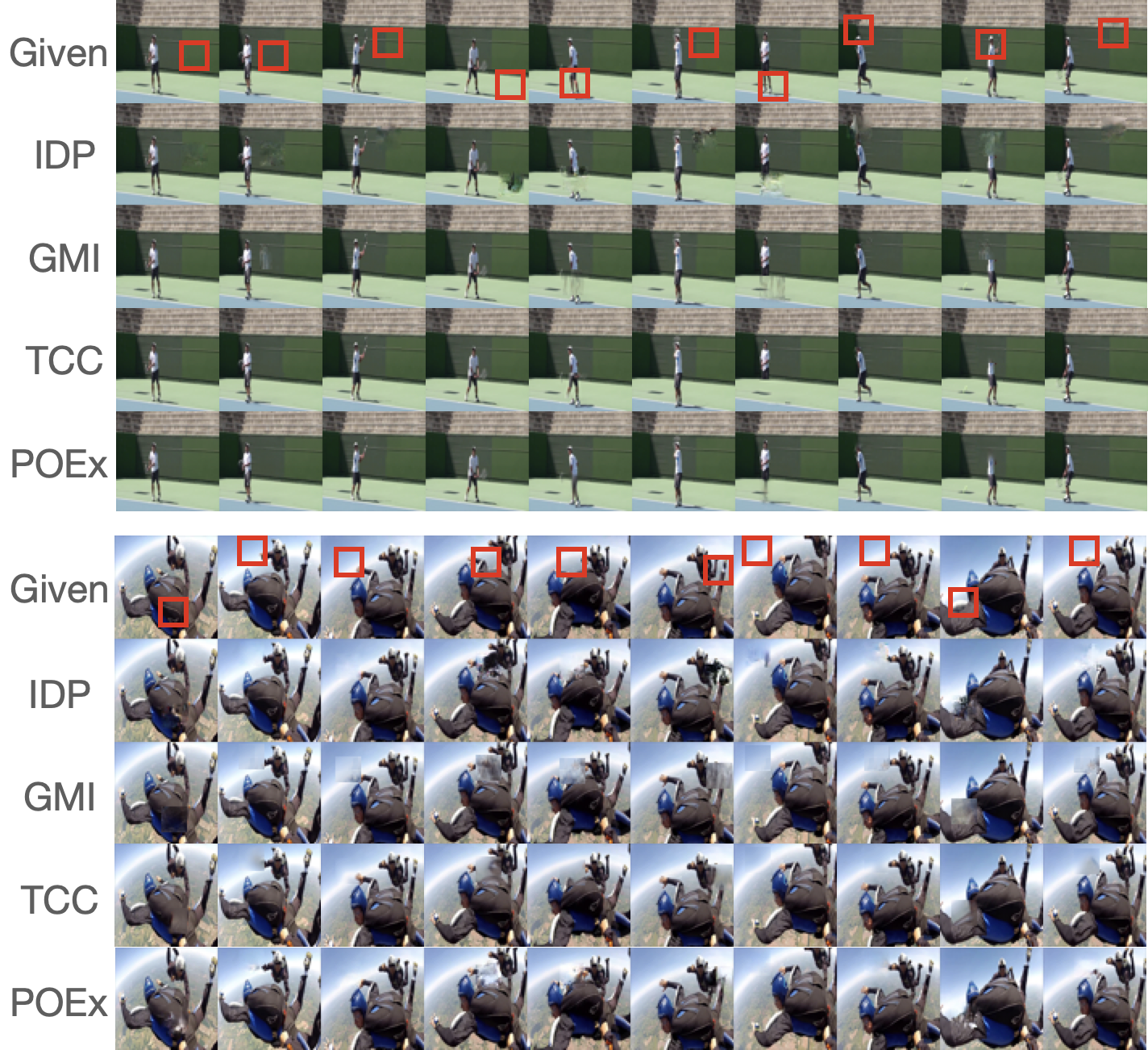}
    \vspace{-10pt}
    \caption{Additional examples for inpainting Youtube videos.}
    \label{fig:youtube_supp}
\end{figure}

\subsection{Set of Functions}
Generalizing the concept of partial sets to infinite dimensional set elements, we can utilize \method to model a set of functions. Each function is represented as a set of (input, target) pairs. We first evaluate on the multi-task Gaussian processes. Following \cite{bonilla2008multi}, we directly specify the correlations among functions: 
\begin{equation*}
    \langle f_l(x), f_k(x') \rangle = K_{lk}^{f}k^{x}(x, x'), \quad y_{il} = \mathcal{N}(f_l(x_i), \sigma_l^2),
\end{equation*}
where $K^f$ is a positive semi-definitive matrix that specifies the inter-task similarities, $k^x$ is a covariance function over inputs. To model $N$ functions, $K^f$ is a $N \times N$ matrix and $K^f_{lk}$ is the element in row $l$ and column $k$. Here, we assume the tasks are permutation equivariant, that is, every two tasks have the same correlation:
\begin{equation*}
    K^f_{lk} = 
\left\{
\begin{array}{lr}
    c,  l \neq k \\
    1,  l = k
\end{array}
\right.
\end{equation*}
Similar to Neural processes, we use a Gaussian kernel for $k^x$. During training, we generate synthetic data from 5 functions with $c=0.9$. We sample at most 100 points from each function and select at most 10 points as context. The sampled points are then transformed by shifting or reversing to obtain 5 target functions. Our \method model is trained by conditioning on a one-hot function identifier.

Following Neural Processes, we also build a set of functions from a set of images. Given a set of MNIST images from the same class, we view each image as a function between the pixel index and its value. During training, we sample an arbitrary subset from each image as context.

\end{document}